\begin{document}

\title{\Large A Minimalist Bayesian Framework for Stochastic Optimization}

\author{Kaizheng Wang\thanks{Department of IEOR and Data Science Institute, Columbia University. Email: \texttt{kaizheng.wang@columbia.edu}.}
}

\date{This version: \today}

\maketitle

\begin{abstract}
The Bayesian paradigm offers principled tools for sequential decision-making under uncertainty, but its reliance on a probabilistic model for all parameters can hinder the incorporation of complex structural constraints. We introduce a minimalist Bayesian framework that places a prior only on the component of interest, such as the location of the optimum. Nuisance parameters are eliminated via profile likelihood, which naturally handles constraints. As a direct instantiation, we develop a MINimalist Thompson Sampling (MINTS) algorithm. Our framework accommodates structured problems, including continuum-armed Lipschitz bandits and dynamic pricing. It also provides a probabilistic lens on classical convex optimization algorithms such as the center of gravity and ellipsoid methods. We further analyze MINTS for multi-armed bandits and establish near-optimal regret guarantees.
\end{abstract}
\noindent{\bf Keywords:} Stochastic optimization, Bayesian method, profile likelihood, regret analysis.

\section{Introduction}\label{sec-intro}

Sequential decision-making under uncertainty is a ubiquitous challenge, where an agent repeatedly make decisions to optimize an unknown objective function based on limited, noisy feedback. Effective performance requires striking a delicate balance between exploiting actions that are believed to be optimal based on past data and exploring lesser-known actions to gather new information.

Several paradigms have emerged to address this exploration-exploitation tradeoff. The \emph{explore-then-commit} (ETC) strategy is the most straightforward, dividing the time horizon into distinct exploration and exploitation phases. Yet, it can be difficult to decide an appropriate split. A more adaptive paradigm is the principle of \emph{optimism in the face of uncertainty}, which constructs optimistic indices like upper confidence bounds to encourage exploration \citep{LRo85,ACF02}. While theoretically sound, this approach often relies on problem-specific bonus calibration. A third one is the \emph{Bayesian paradigm}, which treats the problem as a random instance and maintains a posterior distribution to quantify uncertainty \citep{Tho33,JSW98}. The belief update powered by Bayes' rule provides a principled mechanism for automatically balancing exploration and exploitation. However, the standard Bayesian paradigm requires specifying a prior for all unknown parameters. This becomes a significant bottleneck when one wishes to encode rich structural knowledge, such as shape or smoothness constraints on the objective function, as it can be prohibitively difficult to design a tractable prior that is faithful to these constraints.

To resolve this dilemma, we introduce a minimalist Bayesian framework with significantly enhanced flexibility. Our approach allows the user to place a prior only on a low-dimensional component of interest, such as the location of the optimum. All other parameters are treated as nuisance and handled by the profile likelihood method \citep{BCo94}. We can conveniently enforce structural constraints on them. The reduced-dimension prior and profile likelihood yield a generalized posterior for the component of interest, which then guides subsequent decisions.

\paragraph{Main contributions}

Our contributions are threefold. 
\begin{itemize}
\item (Framework) We develop a minimalist Bayesian framework for stochastic optimization that reasons about the parameter of interest without probabilistic modeling of all unknowns. The approach is lightweight and naturally accommodates structural constraints.

\item (Algorithms and insights) We derive a MINimalist Thompson Sampling (\Alg) algorithm that directly updates and samples from the posterior of the optimum. We also instantiate the framework in complex structured problems including Lipschitz bandits and dynamic pricing, and provide novel probabilistic interpretations of classical convex optimization algorithms.

\item (Theory) We analyze \Alg~for multi-armed bandits and derive near-optimal regret bounds. The theoretical results rigorously justify the effectiveness of our new framework.
\end{itemize}

\paragraph{Related work}

Our framework is closely related to Thompson sampling (TS) \citep{Tho33}, also known as posterior sampling or probability matching because the decision is drawn from the posterior distribution of the optimum \citep{Sco10,RVK18}. However, TS derives such posterior indirectly through a probabilistic model for the entire problem instance. For multi-armed bandits, this means placing a prior on the expected rewards of all arms. In contrast, \Alg~directly specifies a prior on which arm is optimal. A related idea appears in online learning with full information, where one can place a prior on the optimum, exponentiate the empirical loss to form a ``likelihood,'' and sample from a Gibbs posterior \citep{LWa89,Vov90,CFH97,Cat04,BHW16}. These methods require feedback for all decisions in each round, whereas we address the more challenging partial feedback setting.

Certain stochastic optimization problems permit Bayesian reasoning about the optimum without modeling the entire objective, such as probabilistic bisection search over an interval \citep{WFH13}. For general problems, several Bayesian algorithms choose actions to maximize information gain about the optimal solution, by increasing the mutual information between the next observation and the optimum  \citep{VVW09,HSc12,HHG14,RVa18}. These approaches still rely on full probabilistic models for belief updates. Closer in spirit to our work, \citet{SNO21} combine a user-specified prior on the optimum with a standard Bayesian optimization model (e.g.,~Gaussian process) to form a pseudo-posterior, whereas we use profile likelihood to circumvent modeling of the nuisance parameters.

Finally, our framework provides a unified approach to Bayesian optimization under structural constraints, a domain that has largely been addressed case by case. In continuous black-box optimization, efforts have focused on designing Gaussian processes tailored to particular shape constraints \citep{SGF20}. In structured bandits, \cite{VGo24} developed a computationally tractable frequentist algorithm with strong guarantees. It handles general convex constraints but requires the reward distributions of all arms to be supported on a common finite set. Bayesian counterparts with comparable flexibility and practicality are missing. Current approaches focus on specific settings, such as unimodal bandits on graphs \citep{PTR17}.

\paragraph{Outline} The rest of the paper is organized as follows. \Cref{sec-preliminaries} lays out the preliminaries of stochastic optimization. \Cref{sec-method} introduces the minimalist Bayesian framework and the \Alg~algorithm. \Cref{sec-examples} instantiates the framework on canonical problems. \Cref{sec-experiments} demonstrates its efficacy through numerical experiments. \Cref{sec-MAB} conducts a theoretical analysis of \Alg~on multi-armed bandits. \Cref{sec-discussion} concludes the paper with a discussion of future directions.

\paragraph{Notation}
We use the symbol $[n]$ as a shorthand for $\{ 1, 2, \cdots, n \}$ and $| \cdot |$ to denote the absolute value of a real number or cardinality of a set. For nonnegative sequences $\{ a_n \}_{n=1}^{\infty}$ and $\{ b_n \}_{n=1}^{\infty}$, we write $a_n \lesssim b_n$ if there exists a positive constant $C$ such that $a_n \leq C b_n$. 


\section{Preliminaries}\label{sec-preliminaries}

This section lays out the preliminaries for our analysis. We begin by formally defining the stochastic optimization framework and presenting several canonical examples that illustrate its scope. Then, we review the standard Bayesian paradigm for these problems and highlight the key difficulties in incorporating structural knowledge, which motivates the new framework developed in this paper.

\subsection{Stochastic optimization}\label{sec-preliminaries-setup}

In the standard setup of stochastic optimization, an agent seeks to maximize an unknown objective function $f$ over a decision set $\spaceofaction$:
\begin{align}
	\max_{\action \in \spaceofaction} f (\action).
\label{eqn-problem}
\end{align}
The agent learns about the optimum by sequentially interacting with the environment. Starting with an empty dataset $\dataset_0 = \varnothing$, at each period $t \in \ZZ_+$, the agent selects a decision $\action_t \in \spaceofaction$ based on past data $\dataset_{t-1}$, receives randomized feedback $\feedback_t$ from the environment, and updates the dataset to $\dataset_{t} = \dataset_{t-1} \cup \{ (\action_t, \feedback_t) \}$. The performance over $T$ time periods is typically measured by either the cumulative regret $ 
\sum_{t=1}^T 
[
\max_{\action \in \spaceofaction} f(\action)
-
f ( \action_t ) ]$, which sums the suboptimality of each action, or the simple regret $\max_{\action \in \spaceofaction} f(\action) - f (\action_T)$, which measures the quality of the final action taken. Below we present several common examples. 

\begin{example}[Multi-armed bandit]\label{example-MAB}
The decision set is a collection of $K \in \ZZ_+$ arms, i.e.~$\spaceofaction = [K]$. Each arm $\action$ is associated with a reward distribution $ \distP_{\action} $ over $\RR$, and the objective value $f(\action)$ is the expected reward $  \EE_{\rewardrv \sim \distP_{\action}} \rewardrv$. Given $\action_t $ and $\dataset_{t-1}$, the feedback $\feedback_t$ is a sample from $\distP_{\action_t}$.
\end{example}

\begin{example}[Lipschitz bandit \citep{KSU08}]\label{example-Lipschitz}
The set $\spaceofaction$ is equipped with a metric $\distance$. Each decision $\action$ is associated with a reward distribution $ \distP_{\action} $ over $\RR$, and the objective value $f(\action)$ is the expected reward $  \EE_{\rewardrv \sim \distP_{\action}} \rewardrv$. In addition, there exists a constant $M>0$ such that $| f(\action) - f(\action') | \leq M \cdot \distance ( \action, \action' )$ holds for all $\action, \action' \in \spaceofaction$. Given $\action_t $ and $\dataset_{t-1}$, the feedback $\feedback_t$ is a sample from $\distP_{\action_t}$.
\end{example}

\begin{example}[Dynamic pricing \citep{Den15}]\label{example-pricing}
The set $\spaceofaction \subseteq (0, +\infty)$ consists of feasible prices for a product. Any price $\action$ induces a demand distribution $\distP_{\action}$ over $[0, +\infty)$. The objective value $f(\action)$ is the expected revenue $\action \cdot \EE_{\demandrv \sim \distP_{\action}} \demandrv$.
Given $\action_t $ and $\dataset_{t-1}$, the feedback $\feedback_t$ is a sample from $\distP_{\action_t}$.
\end{example}

\begin{example}[Continuous optimization]\label{example-cvx}
The set $\spaceofaction$ is a subset of a Euclidean space. The objective function $f$ belongs to a known class of continuous functions on $\spaceofaction$, such as convex functions with Lipschitz gradients. Given $\action_t $ and $\dataset_{t-1}$, the feedback $\feedback_t$ may include the function value $f(\action_t)$, the gradient $\nabla f(\action_t)$, the Hessian $\nabla^2 f(\action_t)$, or their noisy versions.
\end{example}

As can be seen from the examples, the historical data only reveals incomplete and noisy information about $f$. To make informed decisions under uncertainty, the agent needs to quantify and update beliefs over time. The Bayesian paradigm offers a coherent framework for this task. We now discuss this approach and the key challenges it faces.

\subsection{Bayesian approaches and their challenges}\label{sec-preliminaries-Bayesian}

Consider a family of stochastic optimization problems $\{ \instance_{\parameter} \}_{\parameter \in \spaceofparameter}$, indexed by a parameter $\parameter$ in a potentially infinite-dimensional space 
$\spaceofparameter$. The parameter $\parameter$ specifies all unknown components of a problem instance, such as the objective function and feedback distributions. Any dataset $\dataset$ defines a likelihood function $\likelihood ( \cdot ; \dataset )$ over $\spaceofparameter$.

The Bayesian paradigm treats the problem \eqref{eqn-problem} as a random instance $\instance_{\parameter}$ whose parameter $\parameter$ is drawn from a prior distribution $\distQ_0$ over the space $\spaceofparameter$ \citep{Tho33,JSW98,SSW15,Fra18}. This prior encodes the agent's initial beliefs, such as smoothness or sparsity of the objective function, based on domain knowledge. After $t$ rounds of interaction, the agent obtains data $\dataset_{t}$ and follows a two-step procedure:
\begin{enumerate}
\item (Belief update) Derive the posterior distribution $\distQ_{t}$ given data $\dataset_{t}$ using Bayes' theorem: 
\begin{align}
\frac{
\rd \distQ_{t} 
}{
\rd \distQ_0
}(\parameter)
=
\frac{
\likelihood ( \parameter ; \dataset_{t} )
}{
\int_{\spaceofparameter}
\likelihood ( \parameter' ; \dataset_{t} ) \distQ_0 ( \rd \parameter' )
},\qquad \parameter \in \spaceofparameter.
\label{eqn-Bayes}
\end{align}
This posterior captures the agent's refined understanding of the problem.
\item (Decision-making) Choose the next decision $\action_{t+1}$ by optimizing a criterion based on $\distQ_{t}$. Popular approaches include expected improvement \citep{Moc74,JSW98}, knowledge gradient \citep{FPD09}, Thompson sampling \citep{Tho33}, Bayesian upper confidence bounds \citep{KCG12}, and information-directed sampling \citep{RVa18}.
\end{enumerate}

While elegant, the Bayesian framework requires specifying a probabilistic model for the \emph{entire} problem instance. This becomes a significant bottleneck when the problem involves rich structural knowledge, as encoding complex constraints through priors can be difficult. In addition, maintaining and sampling from a high-dimensional posterior is computationally expensive. 

We illustrate these challenges using the dynamic pricing problem in Example \ref{example-pricing}. Assume binary demand for simplicity: for any price $\action$, the demand $\feedback$ follows a Bernoulli distribution with parameter $\parameter_{\action} \in [ 0 , 1 ]$. The objective is then $f(\action) = \action \parameter_{\action}$. In the prototypical model, a buyer at time $t$ has a latent valuation $\valuation_t$ drawn from a distribution $\distrho$ over $[0, \infty)$, independently of the history and the posted price $\action_t$. The buyer makes a purchase if and only if $\action_t \leq \valuation_t$, resulting in $\feedback_t = \one ( \action_t \leq \valuation_t )$ and
\begin{align}
\parameter_{\action} = \PP ( \valuation_t \geq \action ) = \distrho ( [\action, +\infty) )
, \qquad \forall \action \in \spaceofaction.
\label{eqn-demand}
\end{align}
It is easily seen that $\distrho$, $\parameter$ and $f$ serve as different parametrizations of the same demand model. 
Working with the $\parameter$-parametrization, we can write the likelihood of data $\dataset_{t}$:
\begin{align}
\likelihood ( \parameter; \dataset_{t} ) = \prod_{i=1}^t \parameter_{\action_i}^{\feedback_i} ( 1 - \parameter_{\action_i} )^{1 - \feedback_i}.
\label{eqn-likelihood-Bernoulli}
\end{align}

A Bayesian algorithm would combine this likelihood with a prior $\distQ_0$ on $\parameter$ to obtain the posterior $\distQ_{t}$ and then select the next price $\action_{t+1}$. This is tractable for simple parametric classes \citep{McL84,FVa10,HKZ12}. On the other hand, nonparametric models offer greater flexibility but run into the obstacle of structural constraints. 
The relation \eqref{eqn-demand} immediately implies that $\parameter$ is non-increasing. Furthermore, if $\distrho$ has a density bounded by $M>0$, then $\parameter$ is $M$-Lipschitz. Below we show in two cases the challenges arising from these constraints.


\paragraph{Case 1. Finite feasible set} Suppose that $\spaceofaction$ only consists of $K$ prices $\price_1 < \cdots < \price_K$. Then, the function $\parameter$ is represented by a vector $\parametervector = (\parameter_{\price_1},\cdots, \parameter_{\price_K} )^{\top}$. The structural constraints confine $\parametervector$ to the following convex set:
\begin{align}
\{ \parametervector \in \RR^K:~ 0 \leq \parameter_K \leq \cdots \leq \parameter_1 \leq 1 \text{ and }
\parameter_{j} - \parameter_{j+1} \leq M ( \price_{j+1} - \price_j ) , ~\forall j \in [K]
 \}.
\label{eqn-parameterspace-pricing}
\end{align}
It is unclear how to design a prior over this set that leads to a tractable posterior. The coupling between the parameters rules out simple product distributions, rendering standard Thompson sampling for Bernoulli bandits \citep{Tho33} inapplicable. Even without the Lipschitz constraint, dealing with the monotonicity requires efforts. For instance, a simple reparametrization maps the monotonicity constraint to the probability simplex, where a Dirichlet prior can be used \citep{Cop07}. However, this approach still requires posterior approximation and does not easily accommodate additional constraints like the Lipschitz condition.

\paragraph{Case 2. Continuous feasible set} Suppose that $\spaceofaction$ is an interval $[ \price_{\min}, \price_{\max} ]$. The structural constraints now define a function class:
\begin{align}
\Big\{ h:~ [ \price_{\min}, \price_{\max} ] \to [0, 1]
~\Big|~ -M \leq h'(\action) \leq 0,~ \forall \action \in [u, v] \Big\}.
\label{eqn-parameterspace-pricing-continuous}
\end{align}
Specifying a tractable prior over this class is not straightforward. Conventional Gaussian processes for Bayesian optimization assign zero probability to a bounded class. Shape-constrained versions are only designed for certain structures \citep{SGF20}.

\vspace{1em}

As the example shows, even simple structural constraints such as monotonicity and Lipschitz continuity pose significant challenges to the standard Bayesian paradigm. We will now introduce a more flexible framework to better incorporate prior knowledge.

\section{A minimalist Bayesian framework}\label{sec-method}

We develop a minimalist Bayesian framework that only specifies a prior for the key component of interest rather than the entire problem instance. This lightweight approach can be easily integrated with structural constraints. We start by modeling the optimum alone, illustrating the idea with a canonical example.

\begin{example}[Multi-armed bandit with Gaussian rewards]\label{example-MAB-Gaussian}
Let $K \in \ZZ_+$, $\spaceofparameter = \RR^K$, and $\sigma > 0$. For any $\parametervector \in \spaceofparameter$, denote by $\instance_{\parametervector}$ the multi-armed bandit problem in Example \ref{example-MAB} with reward distribution $\distP_{j} = N ( \parameter_j , \sigma^2 )$, $\forall j \in [K]$. The likelihood function for a dataset $\dataset_{t}$ is
\begin{align}
\likelihood ( \parametervector ; \dataset_{t} ) = \prod_{i=1}^t \frac{1}{\sqrt{2 \pi } \sigma} 
\exp \bigg(
- \frac{
 ( \parameter_{\action_i} - \feedback_i )^2 
}{2 \sigma^2}
\bigg)
.
\label{eqn-likelihood-MAB}
\end{align}
We use a prior distribution $\distQ_0$ over the decision space $[K]$ to represent our initial belief about which arm is optimal. The statement ``Arm $j$ is optimal'' corresponds to the composite hypothesis $H_{j}:~ \parametervector \in \spaceofparameter_{j}$, where
\begin{align}
	\spaceofparameter_{j} = \{
	\bv \in \spaceofparameter :~ v_j \geq v_k,~\forall k \in [K]
	\}.
	\label{eqn-parameterspace-j}
\end{align}

The likelihood function $\likelihood$ is defined for a point $\parametervector$ rather than a set like $\spaceofparameter_j$. To quantify the evidence for the composite hypothesis $H_j$, we turn to the profile likelihood method \citep{BCo94}. Define the profile likelihood of Arm $j$ as the maximum likelihood achievable by any parameter vector consistent with $H_j$:
\begin{align}
\profilelikelihood ( j ; \dataset_{t} ) = \max_{ \bv \in \spaceofparameter_j }
\likelihood ( \bv ; \dataset_{t} ) .
\label{eqn-profilelikelihood}
\end{align}
This is equivalent to performing constrained maximum likelihood estimation of $\parametervector$ over the set $\spaceofparameter_j$. Finally, we mimick the Bayes' rule to derive a (generalized) posterior $\distQ_{t}$ from the prior $\distQ_0$ and the profile likelihood $\profilelikelihood$:
\begin{align}
\distQ_{t} (j) = \frac{
\profilelikelihood ( j ; \dataset_{t} ) \distQ_0 (j)
}{
\sum_{k=1}^{K} \profilelikelihood ( k ; \dataset_{t} ) \distQ_0 (k)
} , \qquad j \in [K].
\label{eqn-posterior}
\end{align}
It represents our updated belief about which arm is optimal, having integrated the evidence from data.
\end{example}

This approach is efficient, flexible, and general, as highlighted by the following remarks.

\begin{remark}[Computational efficiency]\label{remark-MAB}
	The posterior update is computationally tractable. Let $I_j = \{ i \in [t]:~ \action_i = j \}$ be the set of pulls for Arm $j$, and $\hat{\mean}_j = |I_j|^{-1} \sum_{i \in I_j } \feedback_i$ be its empirical mean. The negative log-likelihood is a weighted sum-of-squares:
	\[
	- \log \likelihood ( \parametervector ; \dataset_{t} ) = 
	\frac{1}{2 \sigma^2 } 
	\sum_{j=1}^{K} |I_j| ( \parameter_j - \hat{\mean}_j )^2 
	+ C,
	\]
	where $C$ is a constant. Denote by $L(\parametervector)$ the first term on the right-hand side. We have
	\[
	\log \profilelikelihood ( j ; \dataset_{t} ) = - \min_{ 
		\parametervector \in \spaceofparameter_j
	}
	L ( \parametervector ) - C .
	\]
	This minimization is a simple quadratic program over a convex polytope, which can be solved efficiently. The generalized posterior $\distQ_t$ is then readily computed from these minimum values:
	\begin{align*}
		\distQ_{t} (j) = \frac{
			e^{ -  \min_{ 
					\parametervector \in \spaceofparameter_j
				}
				L ( \parametervector ) }
			\distQ_0 (j)
		}{
			\sum_{k=1}^{K} 
			e^{ -  \min_{ 
					\parametervector \in \spaceofparameter_k
				}L ( \parametervector ) }
			\distQ_0 (k)
		} .
	\end{align*}
	When $K = 2$, an analytical expression is available. Suppose that $I_1, I_2 \neq \varnothing$ and $\hat{\mean}_1 \geq \hat{\mean}_2$. Let
	\begin{align*}
		&\alpha =  \frac{1}{2\sigma^2} \cdot \frac{ ( \hat\mean_1 - \hat\mean_2 )^2 }{  1/ |I_1| + 1/ |I_2| } .
	\end{align*}
	We have
	\[
	\frac{
		\distQ_t(1) 
	}{
		\distQ_t(2) 
	}
	= e^{\alpha}
	\frac{
		\distQ_0(1) 
	}{
		\distQ_0(2) 
	}
	\qquad\text{and}\qquad
	\distQ_t(1) = 1 - \distQ_t(2) = \frac{
		e^{\alpha} \distQ_0(1)
	}{
		e^{\alpha} \distQ_0(1) + \distQ_0(2)
	}.
	\]
\end{remark}

\begin{remark}[Translation invariance]
The above analysis reveals the translation invariance of \Alg\ with Gaussian reward models: if all the reward distributions are simultaneously shifted by a constant, the generalized posterior for the optimal arm remains unchanged. In contrast, standard Bayesian modeling with a prior for all the mean rewards cannot have such property.
\end{remark}

\begin{remark}[Structured bandits]\label{remark-Lipschitz-bandit}
	Structural constraints on the parameter $\parametervector$ can be seamlessly incorporated by restricting the parameter space $\spaceofparameter$. For instance, adding the Lipschitz condition in Example \ref{example-Lipschitz} leads to
	\begin{align}
		\spaceofparameter = \{ 
		\bv \in \RR^K:~
		|v_i - v_j| \leq M \cdot \distance ( i, j ),~\forall i, j \in [K]
		\}.
		\label{eqn-MAB-Lipschitz}
	\end{align}
	The inference procedure outlined in \eqref{eqn-parameterspace-j}, \eqref{eqn-profilelikelihood} and \eqref{eqn-posterior} remains exactly the same.
\end{remark}

\begin{remark}[Other reward distributions]
	The Gaussian assumption is for illustration. This procedure applies to any reward distribution with a tractable likelihood function, such as Bernoulli or other members of the exponential family.
\end{remark}

The core logic of \Cref{example-MAB-Gaussian} can be abstracted into a general belief-updating algorithm for the location of the optimum. We formalize this in \Cref{alg}.

\begin{algorithm}[h]
	{\bf Input:} A family of stochastic optimization problems $\{ \instance_{\parameter} \}_{\parameter \in \spaceofparameter}$ with decision space $\spaceofaction$ and likelihood function $\likelihood$. A prior distribution $\distQ_0$ over $\spaceofaction$. A dataset $\dataset = \{ ( \action_i, \feedback_i) \}_{i=1}^t$.\\
	{\bf Step 1.} Construct the profile likelihood function
	\begin{align}
		& \profilelikelihood ( \action ; \dataset ) = \sup 
		\Big\{
		\likelihood ( \parameter ; \dataset ) :~
		\parameter \in \spaceofparameter 
		\text{ and }
		f_{\parameter} (\action) = \max_{\action' \in \spaceofaction} f_{\parameter} (\action') 
		\Big\}
		,  \qquad \action \in \spaceofaction,
	\end{align}
	where $f_{\parameter}$ denotes the objective function in the problem instance $\instance_{\parameter}$.
	\\
	{\bf Step 2.} Derive a generalized posterior distribution $\distQ_{t}$ by reweighting the prior $\distQ_0$:
	\begin{align}
		\frac{
			\rd \distQ_{t} 
		}{
			\rd \distQ_0
		} (\action)
		=
		\frac{
			\profilelikelihood ( \action ; \dataset_{t} )
		}{
			\int_{\spaceofaction}
			\profilelikelihood ( \action' ; \dataset_{t} ) \distQ_0 ( \rd \action' )
		}, \qquad  \action \in \spaceofaction.
		\label{eqn-generalized-Bayes}
	\end{align}
	{\bf Output:} $\distQ_{t}$.
	\caption{Minimalist Bayesian inference for the optimum}
	\label{alg}
\end{algorithm}

\Cref{alg} provides a modular inference engine. It can be paired with any decision-making rule that operates on a belief distribution over the optimal action. One natural choice is a minimalist version of Thompson Sampling, presented in \Cref{alg-MINTS}. It is conceptually simpler than standard Thompson Sampling, which starts with a prior for the full parameter $\parameter$ and then draws a decision from the posterior of the optimum in each iterate. Our approach only requires a prior for the optimum.

\begin{algorithm}[h]
	{\bf Input:} A family of stochastic optimization problems $\{ \instance_{\parameter} \}_{\parameter \in \spaceofparameter}$ with decision space $\spaceofaction$ and likelihood function $\likelihood$. A prior distribution $\distQ_0$ over $\spaceofaction$.\\
	Let $\dataset_0 = \varnothing$.\\
	{\bf For $t = 1,2,\cdots$:}\\
	\hspace*{.6cm} Sample a decision $\action_t$ from $\distQ_{t-1}$ and receive feedback $\feedback_t$.\\
	\hspace*{.6cm} Run \Cref{alg} on the updated dataset $\dataset_{t} = \dataset_{t-1} \cup \{ (\action_t, \feedback_t) \}$ to get $\distQ_{t}$.\\
	{\bf Output:} A sequence of decisions $\{ \action_t \}_{t=1}^{\infty}$.
	\caption{MINimalist Thompson Sampling (\Alg)}
	\label{alg-MINTS}
\end{algorithm}

In addition to the optimum, we may also have unknown structural parameters to deal with, such as the noise level $\sigma$ in \Cref{example-MAB-Gaussian} or the Lipschitz constant $M$ in \Cref{example-Lipschitz}. We can use a prior distribution to jointly model the optimum and those parameters. In general, let $\mainparameter$ represent the key components that we wish to model, and $\spaceofmainparameter$ be the space it lives in. For instance, when $\gamma$ encodes the optimum and the Lipshitz constant $M$, we have $\spaceofmainparameter = \spaceofaction \times [0, +\infty)$. Denote by $\spaceofparameter_{\mainparameter}$ the collection of $\parameter$'s whose associated instance $\instance_{\parameter}$ has key components $\mainparameter$. The full parameter space $\spaceofparameter$ is covered by the subsets $\{ \spaceofparameter_{\mainparameter} \}_{\mainparameter \in \spaceofmainparameter}$. 
Let $\distQ_0$ be a prior distribution for the key components. Given data $\dataset_{t}$, the profile likelihood is
\[
\profilelikelihood ( \mainparameter ; \dataset_{t} )
= \sup_{ \parameter \in \spaceofparameter_{\mainparameter} }
\likelihood ( \parameter ; \dataset_{t} ).
\]
Then, we obtain the generalized posterior distribution $\distQ_{t}$ through
\[
\frac{
	\rd \distQ_{t} 
}{
	\rd \distQ_0
} (\mainparameter)
=
\frac{
	\profilelikelihood ( \mainparameter ; \dataset_{t} )
}{
	\int_{\spaceofaction}
	\profilelikelihood ( \mainparameter' ; \dataset_{t} ) \distQ_0 ( \rd \mainparameter' )
}, \qquad  \mainparameter \in \spaceofmainparameter.
\]
\Cref{alg} is a special case where the key component $\mainparameter$ is just the optimum. In the other extreme, when the key component is the whole parameter $\parameter$, each $\spaceofparameter_{\mainparameter}$ becomes a singleton, and we obtain the standard Bayesian procedure.

\section{Examples and new insights}\label{sec-examples}

In this section, we demonstrate the versatility of the minimalist Bayesian framework by applying it to several fundamental problems. We show that it not only handles complex settings like continuum-armed Lipschitz bandits but also provides novel probabilistic interpretations of classical algorithms in convex optimization.

\subsection{Continuum-armed Lipschitz bandit}

Consider a continuum-armed Lipschitz bandit whose decision space $\spaceofaction$ is the $d$-dimensional unit cube $[0, 1]^d$. The parameter space $\spaceofparameter$ consists of all $M$-Lipschitz functions on $\spaceofaction$, where $M>0$ is a known constant.

\paragraph{Noiseless rewards} 
First, assume the feedback $\feedback_t$ is the exact function value, $f(\action_t)$. In this setting, the likelihood $ \likelihood ( h ; \dataset_{t} ) $ is binary: it is 1 if $h$ is consistent with all observations, i.e.~$h(\action_i) = \feedback_i$ for all $i \in [t]$. Consequently, the profile likelihood $\profilelikelihood ( \action ; \dataset_{t} ) $ for an action $\action$ being optimal is also binary: it is 1 if there exists at least one $M$-Lipschitz function that interpolates the data and attains its maximum at $\action$. The following lemma shows that the existence check is a convex feasibility problem. See \Cref{sec-lem-Lipschitz-noiseless-proof} for the proof.

\begin{lemma}\label{lem-Lipschitz-noiseless}
The profile likelihood $\profilelikelihood ( \action; \dataset_{t} )$ is $1$ if the convex polytope
	\[
	S(\action) =
	\{
	(v, v_1,\cdots, v_t)
	:~
	v_i \leq v \leq v_i + M   \| \action - \action_i \|_2 , ~ \forall i  
	\text{ and }
	| v_i - v_j | \leq M   \| \action_i - \action_j \|_2 , ~ \forall i, j  
	\}
	\]
	is non-empty, and $0$ otherwise.
\end{lemma}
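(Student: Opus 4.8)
The plan is to recast the lemma as a statement about Lipschitz interpolation and prove the two directions separately. By definition $\profilelikelihood(\action;\dataset_t)$ is the supremum of the $\{0,1\}$-valued likelihood over the admissible set, so it equals $1$ exactly when there exists an $M$-Lipschitz function $h$ on $[0,1]^d$ with $h(\action_i)=\feedback_i$ for every $i$ and with $h(\action)=\max_{\action'\in\spaceofaction}h(\action')$, and equals $0$ otherwise. Throughout I read the coordinates $v_i$ in $S(\action)$ as the prescribed function values at the queried points; the noiseless interpolation requirement forces $v_i=\feedback_i$, so that $v$ (the value at $\action$) is the only genuine degree of freedom and $S(\action)$ is non-empty precisely when an admissible $v$ exists. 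It therefore suffices to show that such an $h$ exists if and only if $S(\action)\neq\varnothing$.

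For the easy direction (profile likelihood $1 \Rightarrow S(\action)\neq\varnothing$) I would take such an $h$, set $v=h(\action)$ and $v_i=\feedback_i=h(\action_i)$, and read off the three families of constraints directly: $v_i\le v$ because $\action$ is a maximizer; $v-v_i\le M\|\action-\action_i\|_2$ and $|v_i-v_j|\le M\|\action_i-\action_j\|_2$ because $h$ is $M$-Lipschitz. Hence $(v,v_1,\dots,v_t)\in S(\action)$.

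For the substantive direction ($S(\action)\neq\varnothing \Rightarrow$ profile likelihood $1$) I would take a feasible point and exhibit an explicit interpolant via the \emph{minimal} (lower) McShane--Whitney extension of the augmented data $\{(\action,v)\}\cup\{(\action_i,\feedback_i)\}_{i=1}^t$:
\[
h(\action') = \max\Big( v - M\|\action'-\action\|_2,\ \max_{i\in[t]} \big(\feedback_i - M\|\action'-\action_i\|_2 \big)\Big).
\]
As a finite maximum of $M$-Lipschitz functions, $h$ is $M$-Lipschitz on $[0,1]^d$. Then I would verify the three required properties using the three constraint families: (i) $h(\action)=v$, since $\feedback_i\le v$ makes every competing term at $\action$ at most $v$; (ii) $h(\action_i)=\feedback_i$, since $v\le\feedback_i+M\|\action-\action_i\|_2$ and $|\feedback_i-\feedback_j|\le M\|\action_i-\action_j\|_2$ make every competing term at $\action_i$ at most $\feedback_i$; and (iii) $h(\action')\le v$ for every $\action'$, because $v-M\|\action'-\action\|_2\le v$ and $\feedback_i-M\|\action'-\action_i\|_2\le \feedback_i\le v$. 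Properties (i) and (iii) show $\action$ is a global maximizer, so $\likelihood(h;\dataset_t)=1$ and thus $\profilelikelihood(\action;\dataset_t)=1$.

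The main obstacle — and the reason for using the minimal rather than the maximal extension — is controlling the maximum over the entire continuum $[0,1]^d$ rather than merely over the queried points, since an arbitrary interpolant could rise above $v$ in the gaps between data. The lower-envelope form is exactly what caps $h$ uniformly by $v$, converting the ``global maximum at $\action$'' requirement into the single pointwise-checkable inequality $\feedback_i\le v$, which is the first constraint family of $S(\action)$. The remaining bookkeeping — restricting the $\R^d$-defined extension to the cube $[0,1]^d$ and confirming the Lipschitz constant is preserved under restriction and under taking maxima — is routine.
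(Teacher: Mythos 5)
Your proof is correct and follows the same skeleton as the paper's: in the easy direction you read off a feasible point of $S(\action)$ from an interpolant, and in the substantive direction you extend the augmented data $\{(\action,v)\}\cup\{(\action_i,\feedback_i)\}_{i=1}^t$ to an $M$-Lipschitz function capped above by $v$. The extension device differs, though, in a way worth noting. The paper invokes the Kirszbraun theorem to obtain an abstract interpolant $g$ and then truncates it at level $v$; as written the paper sets $h=\max\{g,v\}$, which is a sign slip, since that choice forces $h(\action_i)=v$ rather than $v_i$ and makes $\action$ a \emph{minimizer} --- the intended map is $h=\min\{g,v\}$. Your explicit lower McShane--Whitney envelope buys two things: it is self-contained (no appeal to an extension theorem), and it is automatically bounded above by $v$, so the truncation step --- exactly where the paper's slip occurs --- disappears; in exchange you verify interpolation termwise, which you do correctly from the three constraint families of $S(\action)$. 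You were also right to pin $v_i=\feedback_i$: with the $v_i$ left free, as in the lemma's literal definition of $S(\action)$, the set is always nonempty (take all coordinates zero), so the nonemptiness criterion is only meaningful after this substitution, and this is also how the paper's own proof implicitly uses $S(\action)$, taking the feasible point $(h(\action),h(\action_1),\cdots,h(\action_t))$ with $h(\action_i)=\feedback_i$.
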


Based on that, we have
\begin{align*}
	\frac{
		\rd \distQ_{t} 
	}{
		\rd \distQ_0
	} (\action)
	=
	\frac{
		\one ( S(\action) \neq \varnothing )
	}{
		\int_{\spaceofaction}
		\one ( S(\action') \neq \varnothing ) \distQ_0 ( \rd \action' )
	} .
\end{align*}
The result leads to a rejection sampling algorithm for sampling from the generalized posterior $\distQ_{t}$: draw a candidate optimum $\action'$ from $\distQ_0$, and accept it if $S(\action') \neq \varnothing$.

\paragraph{Gaussian rewards} 
Suppose that the feedback $\feedback_t$ is the function value at $\action_t$ contaminated by random noise from $N(0, \sigma^2)$ with known $\sigma > 0$. The likelihood and profile likelihood are
\begin{align*}
	& \likelihood ( h ; \dataset_{t} ) = \prod_{i=1}^t \frac{1}{\sqrt{2 \pi } \sigma} 
	\exp \bigg(
	- \frac{
		[ h(\action_i) - \feedback_i ]^2 
	}{2 \sigma^2}
	\bigg) , \\
	& \profilelikelihood ( \action ; \dataset_t ) = \sup 
	\Big\{
	\likelihood ( h ; \dataset_t ) \Big|~
	h \in \spaceofparameter
	\text{ and }
	h (\action) = \max_{\action' \in [0,1]^d } h (\action') 
	\Big\}.
\end{align*}
The following lemma shows that for any $\action \in \spaceofaction$, $ \profilelikelihood ( \action ; \dataset_t )$ can be computed up to an additive constant by solving a finite-dimensional convex program. The proof is deferred to \Cref{sec-lem-Lipschitz-noisy-proof}.

\begin{lemma}\label{lem-Lipschitz-noisy}
	For any $\action \in [0, 1]^d $, let $S(\action)$ be the set defined in \Cref{lem-Lipschitz-noiseless}, and $V(\action)$ be the optimal value of the following convex program:
\begin{align}
			\min_{ ( v, v_1, \cdots, v_t ) \in S(\action) }
&  \bigg\{
\frac{1}{2 \sigma^2}	\sum_{i=1}^{t}  ( v_i - \feedback_i )^2 
\bigg\} .
		\label{eqn-Lipschitz-cvx}
\end{align}
	We have $- \log \profilelikelihood ( \action ; \dataset_{t} ) = V (\action) + C$ for a constant $C$. 
\end{lemma}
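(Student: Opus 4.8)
The plan is to collapse the infinite-dimensional optimization defining $\profilelikelihood(\action; \dataset_t)$ onto the finite-dimensional program \eqref{eqn-Lipschitz-cvx}, using two observations: the likelihood sees the candidate $h$ only through its values at the sampled points, and the existence of an $M$-Lipschitz interpolant with a prescribed global maximizer is governed exactly by the linear constraints defining $S(\action)$.

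First I would note that $\likelihood(h; \dataset_t)$ depends on $h$ only through $(h(\action_1), \ldots, h(\action_t))$. Writing $v_i = h(\action_i)$ and $v = h(\action)$,
\[
-\log \likelihood(h; \dataset_t) = \frac{1}{2\sigma^2}\sum_{i=1}^t (v_i - \feedback_i)^2 + C, \qquad C = t\log(\sqrt{2\pi}\,\sigma),
\]
with $C$ independent of $\action$, $h$, and the data. Maximizing $\likelihood$ over admissible functions is thus equivalent to minimizing $\tfrac{1}{2\sigma^2}\sum_i (v_i - \feedback_i)^2$ over the set of \emph{achievable} tuples $(v, v_1, \ldots, v_t)$, where achievability means that some $M$-Lipschitz $h$ on $[0,1]^d$ has $h(\action_i) = v_i$, $h(\action) = v$, and $\action \in \argmax_{\action'} h(\action')$ with maximal value $v$.

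The crux is to identify the achievable set with $S(\action)$, the same correspondence underlying \Cref{lem-Lipschitz-noiseless}. Necessity is immediate: a global maximizer at $\action$ forces $v \geq v_i$, while the $M$-Lipschitz property gives $v - v_i \leq M\|\action - \action_i\|_2$ and $|v_i - v_j| \leq M\|\action_i - \action_j\|_2$, i.e.\ $(v, v_1, \ldots, v_t) \in S(\action)$. For sufficiency, given a tuple in $S(\action)$ I would exhibit the explicit McShane--Whitney extension
\[
h(x) = \min\Big\{\, v, \ \min_{1 \leq i \leq t}\big( v_i + M\|x - \action_i\|_2 \big) \Big\},
\]
an $M$-Lipschitz function as a pointwise minimum of $M$-Lipschitz functions. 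Here the constraints $|v_i - v_j| \leq M\|\action_i - \action_j\|_2$ yield $h(\action_j) = v_j$; the constraints $v \leq v_i + M\|\action - \action_i\|_2$ make the inner minimum at $\action$ at least $v$, so $h(\action) = v$; and truncation at $v$ together with $v \geq v_i$ forces $h \leq v$ everywhere, so $\action$ is a global maximizer.

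Combining the pieces gives $\profilelikelihood(\action; \dataset_t) = \exp(-V(\action) - C)$, where $V(\action)$ is the minimum of the convex quadratic $\tfrac{1}{2\sigma^2}\sum_i(v_i - \feedback_i)^2$ over the polytope $S(\action)$, which is exactly program \eqref{eqn-Lipschitz-cvx}; taking logarithms yields the claim. I expect the sufficiency half of the characterization to be the main obstacle: one must verify that the single closed-form extension simultaneously interpolates the data, attains value $v$ at $\action$, and stays at or below $v$ globally. Once that construction is validated, the likelihood reduction and the identification with \eqref{eqn-Lipschitz-cvx} follow routinely.
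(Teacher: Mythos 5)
Your proposal is correct, and its skeleton matches the paper's proof of \Cref{lem-Lipschitz-noisy}: reduce the likelihood to a function of the values $(h(\action),h(\action_1),\ldots,h(\action_t))$, identify the achievable value tuples with the polytope $S(\action)$, and thereby collapse the infinite-dimensional supremum onto \eqref{eqn-Lipschitz-cvx}. The genuine difference is in the sufficiency half of that identification. The paper (via the construction in the proof of \Cref{lem-Lipschitz-noiseless}) invokes the Kirszbraun extension theorem to obtain an $M$-Lipschitz interpolant $g$ of the prescribed values and then truncates it at level $v$; you instead exhibit the McShane--Whitney extension $h(x)=\min\bigl\{v,\ \min_{1\le i\le t}\bigl(v_i+M\|x-\action_i\|_2\bigr)\bigr\}$ in closed form and verify interpolation, $h(\action)=v$, and $h\le v$ directly. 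For real-valued functions the explicit formula suffices, so your route is more elementary and constructive; it also sidesteps a slip in the paper's construction, where the truncation is written as $\max\{g(\cdot),v\}$ --- since $S(\action)$ forces $v\ge v_i$, that choice would overwrite the interpolated values and make $\action$ a global \emph{minimizer}, and it should read $\min\{g(\cdot),v\}$, which is exactly what your formula implements. A second, minor structural difference: the paper proves the two inequalities between $-\log\profilelikelihood(\action;\dataset_t)$ and $V(\action)+C$ separately, using an $\varepsilon$-near-maximizer of the profile likelihood in one direction and an optimal solution of \eqref{eqn-Lipschitz-cvx} in the other, whereas your exact characterization of the achievable tuples equates the supremum over functions with the optimum over $S(\action)$ in a single step, with no $\varepsilon$ bookkeeping (the minimum over $S(\action)$ is attained since the quadratic is coercive in $(v_1,\ldots,v_t)$ and the feasible set is closed and nonempty). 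Both arguments deliver the lemma; yours is self-contained where the paper leans on an external extension theorem.
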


Define $V_{\min} = \min_{\action \in \spaceofaction} V(\action)$, which can be computed by solving \eqref{eqn-Lipschitz-cvx} without the constraints $v_i \leq v \leq v_i + M   \| \action - \action_i \|_2$ for $i \in [t]$. The facts $V_{\min} - V (\action) \leq 0$ and
\begin{align*}
	\frac{
		\rd \distQ_{t} 
	}{
		\rd \distQ_0
	}
	( \action  )
	=
	\frac{
		e^{ V_{\min} - V (\action) }
	}{
		\int_{\spaceofaction }
		e^{ V_{\min} - V (\action') }   \distQ_0 (\rd \action')
	}
\end{align*}
lead to a rejection sampling scheme for drawing from $\distQ_{t}$: sample $\action'$ from $\distQ_0$ and accept it with probability $e^{ V_{\min} - V (\action') }$.

\subsection{Dynamic pricing}\label{sec-example-pricing}

Next, we revisit the dynamic pricing problem discussed in \Cref{sec-preliminaries-Bayesian}. The demand is binary, the full parameter $\parameter$ encodes the purchase probability at each price, and the likelihood function is defined in \eqref{eqn-likelihood-Bernoulli}. 

Suppose the decision space $\spaceofaction$ consists of $K$ feasible prices $\price_1 < \cdots < \price_K$. The parameter space $\spaceofparameter$ is given by \eqref{eqn-parameterspace-pricing}. Let $I_j = \{ i \in [t]:~ \action_i = \price_j \}$ record the times when price $\price_j$ was chosen, $\hat{\mean}_j = |I_j|^{-1} \sum_{i \in I_j } \feedback_i$ be the empirical mean demand, and  $\spaceofparameter_j$ be defined as in \eqref{eqn-parameterspace-j}. Note that $\spaceofparameter_j$ is a convex polytope and the log-likelihood is a concave function:
\begin{align}
\log \likelihood ( \parametervector ; \dataset_{t} ) = 
\sum_{j=1}^{K} |I_j| \cdot [
\hat{\mean}_j 
\log \parameter_j 
+ (1 - \hat{\mean}_j )
\log ( 1 - \parameter_j )
].
\label{eqn-pricing-log-likelihood}
\end{align}
Then, the profile likelihood $\profilelikelihood ( \price_j , \dataset_{t} ) = \sup_{ \parameter \in \spaceofparameter_{j} } \likelihood ( \parametervector ; \dataset_{t} )$ can be computed through convex optimization, which further yields the generalized posterior $\distQ_{t}$.

We can add more structural constraints based on domain knowledge. As long as the parameter space $\spaceofparameter$ remains convex, the computation of $\distQ_{t}$ is tractable. In contrast, the constraints could render standard Bayesian approaches difficult.

\subsection{First-order convex optimization}

Perhaps surprisingly, our framework provides probabilistic interpretations of classical cutting-plane methods in first-order convex optimization \citep[Section 3.2.8]{Nes18}. Consider minimizing an unknown convex function $f$ over a convex body $\spaceofaction \subseteq \RR^d$, where at each query point $\action_t$, we receive a subgradient $\feedback_t \in \partial f(\action_t)$.
We first offer a minimalist Bayesian perspective for the \emph{center of gravity method}.

\begin{example}[Center of gravity method]\label{example-center-of-gravity}
Denote by $\distQ_0$ the uniform distribution over $\spaceofaction$, which is a natural prior for the unknown optimum. Suppose that for any $t \in \NN$, we apply \Cref{alg} to the dataset $\dataset_{t} = \{ (\action_i, \feedback_i) \}_{i=1}^t$ to derive a generalized posterior $\distQ_{t}$, and use its mean as the next decision $\action_{t+1}$. We have the following results. See \Cref{sec-example-center-of-gravity-proof} for their proof.
\begin{enumerate}
\item $\distQ_{t}$ is the uniform distribution over the set $\{ \action \in \spaceofaction :~
 \feedback_i^{\top} ( \action - \action_i ) \leq 0,~\forall i \in [t]
\}$, which is the intersection of $\spaceofaction$ and $t$ hyperplanes;
\item The procedure is equivalent to the center of gravity method.
\end{enumerate}
\end{example}

Therefore, the center of gravity method can be viewed as a (minimalist) Bayesian approach that uses the (generalized) posterior mean to propose the next query in each iterate. Each query yields a hyperplane that reduces the current search space to one side of it. 
When $t$ is large, $\distQ_{t}$ becomes complex, making it hard to compute the mean. Indeed, the center of gravity method is known to be computationally expensive. 

Meanwhile, Bayesian statistics offers two powerful approaches for handling sophisticated posteriors. One is Markov chain Monte Carlo (MCMC), which designs a Markov chain with the posterior as the limiting distribution; the other is variational Bayes, which approximates the posterior using a family of simple distributions that are easy to sample from.
Interestingly, both ideas lead to well-known optimization algorithms developed as tractable approximations of the center of gravity method. \cite{BVe04} designed an MCMC algorithm for estimating the center of gravity based on a random walk. On the other hand, below we give a variational Bayes interpretation of the celebrated \emph{ellipsoid method}.

\begin{example}[Ellipsoid method]\label{example-ellipsoid}
For any vector $\bc \in \RR^d$ and positive definite matrix $\bA \in \RR^{d\times d}$, define an ellipsoid $E( \bc, \bA ) = \{ \bv \in \RR^d:~
(\bv - \bc)^{\top} \bA (\bv - \bc) \leq 1
 \}$ and denote by $\distQ ( \bc, \bA )$ be the uniform distribution over that. Let $\Phi$ be the parametric distribution family $\{ \distQ ( \bc, \bA ) \}_{\bc \in \RR^d, \bA \succ 0}$.
 
Suppose that $\spaceofaction$ is an ellipsoid $E (\bc_0, \bA_0)$, and let the associated uniform distribution $\distQ_0 = \distQ ( \bc_0, \bA_0 )$ be our prior for the optimum. At any time $t \in \NN$, our belief is characterized by a distribution $\distQ_t = \distQ ( \bc_t, \bA_t ) \in \Phi$. Consider the following updating rule motivated by the assumed density filtering method \citep{Min01}:
\begin{itemize}
	\item use the mean of $\distQ_t$ as the next decision (i.e.~$\action_{t+1} = \bc_t$) and receive feedback $\feedback_{t+1} \in \partial f (\action_{t+1})$;
	\item run \Cref{alg} with prior $\distQ_t$ and data $\{ (\action_{t+1}, \feedback_{t+1}) \}$ to obtain a generalized posterior $\bar\distQ_{t+1}$;
	\item find its best approximation in $\Phi$ that minimizes the forward Kullback-Leibler divergence:
	\[
	\distQ_{t+1} \in 
	\argmin_{ \distQ \in \Phi } D_{\mathrm{KL}} ( \bar\distQ_{t+1} \| \distQ ).
	\]
\end{itemize}
The procedure generates a sequence of uniform distributions over ellipsoids to capture our belief about the optimum. It is equivalent to the ellipsoid method in convex optimization, which has closed-form updates. See \Cref{sec-example-ellipsoid-proof} for a proof.
\end{example}

\section{Numerical experiments}\label{sec-experiments}

Having established the minimalist Bayesian framework, we now demonstrate its efficacy through numerical experiments. The Python code for reproducing the results is available at \url{https://github.com/kw2934/MINTS}. 


Our first example is dynamic pricing with binary demand, as discussed in \Cref{example-pricing}, \Cref{sec-preliminaries-Bayesian} and \Cref{sec-example-pricing}. The decision set $\spaceofaction$ is $\{ j/20:~j \in [19] \}$, with $K = 19$ candidate prices from $0.05$ to $0.95$. The latent valuation distribution $\rho$ is the uniform distribution over the unit interval $[0, 1]$. This determines the expected demand $\parameter_{\action} = 1 - \action$ and expected revenue $f(\action) = \action (1 - \action)$. The objective function is maximized at $\action^* = 1/2$. After $t$ rounds, our data $\dataset_{t} $ consists of chosen prices $\{ \action_i \}_{i=1}^t$ and realized demands $\{ \feedback_i \}_{i=1}^t$. 
We test two versions of \Alg~(\Cref{alg-MINTS}) and two multi-armed bandit algorithms over $T = 5000$ rounds:

\begin{enumerate}
\item \Alg~with Bernoulli likelihood \eqref{eqn-likelihood-Bernoulli}, which uses the correct demand model. The parameter space is the one in \eqref{eqn-parameterspace-pricing} with $M = 1$.

\item \Alg~with Gaussian likelihood \eqref{eqn-likelihood-MAB}, which incorrectly models the conditional demand distribution given price $\action$ as $N( \parameter_{\action} , \sigma^2 )$. We set $\sigma = 1/2$ because the sub-Gaussian variance proxy of the binary demand is bounded by $1/2$ \citep{Hoe94}. The parameter space is the one in \eqref{eqn-parameterspace-pricing} with $M = 1$.

\item Thompson sampling with Gaussian likelihood and prior \citep{RVK18}. It treats the pricing problem as a multi-armed bandit and ignores structural constraints. In the $t$-th round, the reward is the revenue $\action_{t} \feedback_t$ rather than the demand $\feedback_t$. The conditional reward (revenue) distribution given the $j$-th candidate price is modeled as $N( \mean_j , \sigma^2 )$ with $\sigma = 1/2$. The mean parameters $\{ \mean_j \}_{j=1}^K$ are assumed to be independently drawn from a prior distribution $N(0, 1)$.

\item The UCB1 algorithm \citep{ACF02}. Similar to the above, this is also a multi-armed bandit algorithm that directly works with revenues. It constructs optimistic estimates of conditional mean rewards using upper confidence bounds, and chooses the decision with the highest estimate.
\end{enumerate}


We run 100 independent simulations and in each of them, we compute the cumulative regret $ \sum_{i=1}^t 
[
f(\action^*)
-
f ( \action_i ) ]$ over $t \in [T]$ for every algorithm. The results are summarized in \Cref{fig-regrets-pricing}. The left panel shows the regrets of different algorithms. The right panel plots the relative regrets normalized by that of \Alg~with Bernoulli likelihood (skipping the first 20 rounds). In both panels, each solid curve corresponds to the empirical mean value, and the thin shaded areas around that give 95\% confidence bounds on the expected values (empirical mean $\pm$ 1.96 times the standard error). The two \Alg~algorithms significantly outperform bandit algorithms, and the relative advantage grows with time. This highlights the benefit of incorporating structural knowledge. We also note that using a misspecified demand model only has minor impact on the performance of \Alg.

\begin{figure}[t]
	\centering
	\includegraphics[width=0.49\linewidth]{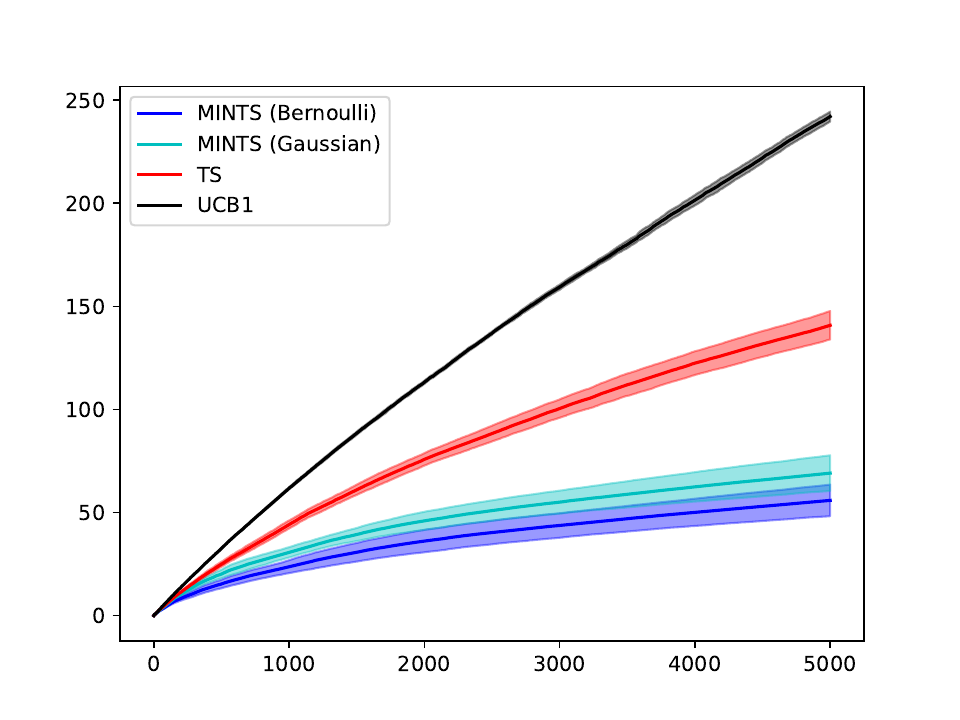}
	\includegraphics[width=0.49\linewidth]{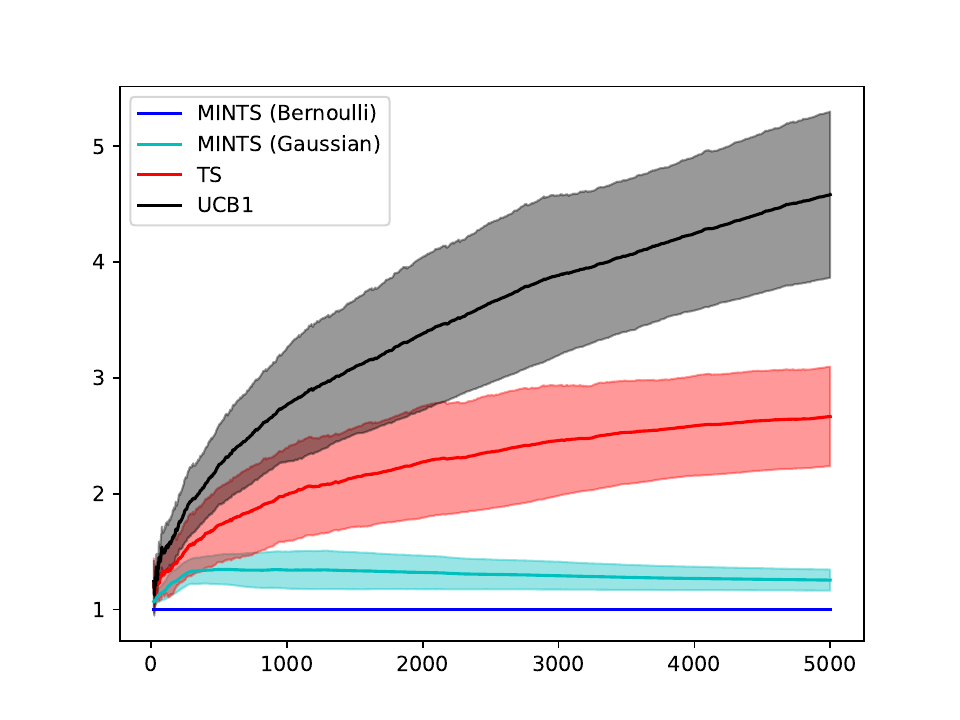}	
	\caption{Dynamic pricing experiments. $x$-axis: time. $y$-axis: regret (left panel) and relative regret (right panel). Blue: \Alg~with Bernoulli likelihood. Cyan: \Alg~with Gaussian likelihood. Red: Thompson sampling. Black: UCB1.}
	\label{fig-regrets-pricing}
\end{figure}


Our second numerical example concerns unstructured multi-armed bandit with Gaussian rewards (\Cref{example-MAB-Gaussian}). There are $K = 5$ arms, and the reward distribution of Arm $j$ is $N(\parameter_j, 1)$ with $\parameter_j = j / 5 $. We compare the following algorithms over $T = 5000$ rounds:

\begin{enumerate}
	\item \Alg~with Gaussian likelihood \eqref{eqn-likelihood-MAB} and $\sigma = 1$. The parameter space is $\RR^K$.
	
	\item Thompson sampling with Gaussian likelihood and prior \citep{RVK18}. The prior distribution is $N(0, 1)$.
	
	\item The UCB1 algorithm \citep{ACF02}.
\end{enumerate}

\Cref{fig-regrets-MAB} summarizes the results over 100 independent simulations, showing the absolute and relative (compared to Thompson sampling) regrets of all algorithms. We see that \Alg~outperforms UCB1 and is comparable to Thompson sampling by a factor of around 2. The latter can be understood as the price of minimalist modeling. Targeting the main parameter while profiling out the nuisance component inevitably loses information. Fortunately, the cost of flexibility is not prohibitive and can be compensated by major gains in structured problems.

\begin{figure}[t]
	\centering
	\includegraphics[width=0.49\linewidth]{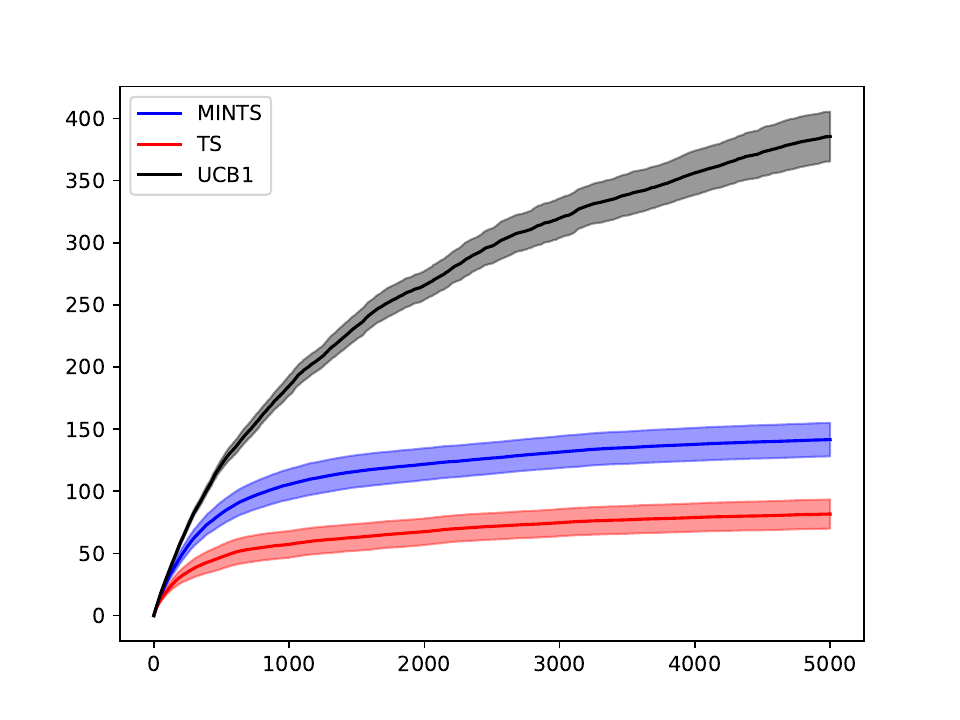}
	\includegraphics[width=0.49\linewidth]{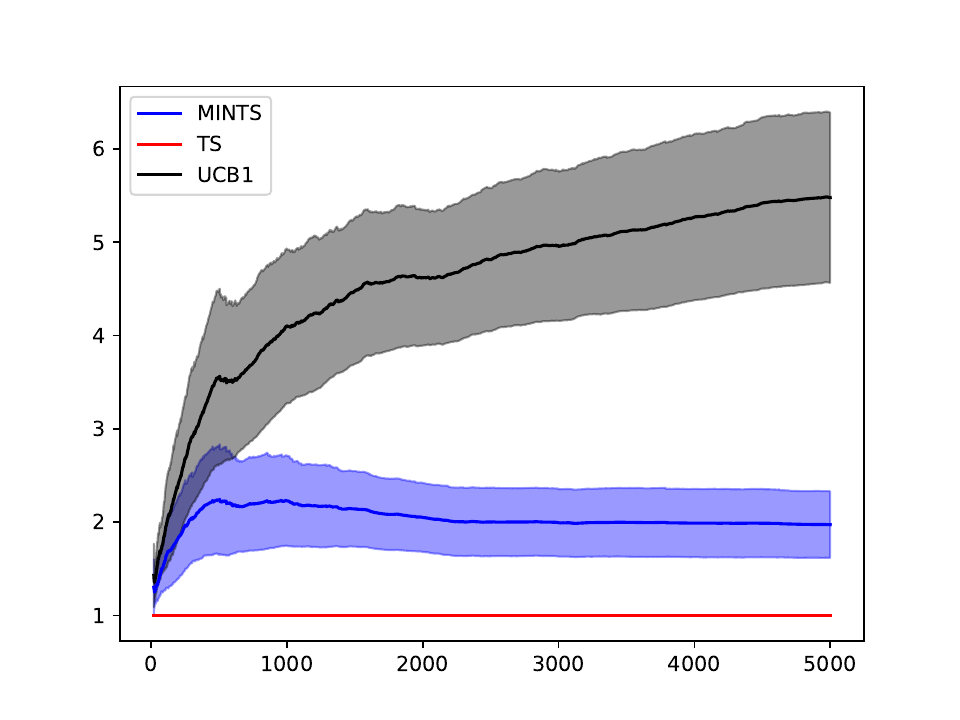}	
\caption{Multi-armed bandit experiments. $x$-axis: time. $y$-axis: regret (left panel) and relative regret (right panel). Blue: \Alg. Red: Thompson sampling. Black: UCB1.}
	\label{fig-regrets-MAB}
\end{figure}

\section{Theoretical analysis for multi-armed bandits}\label{sec-MAB}

In this section, we will analyze \Alg\ (\Cref{alg-MINTS}) for the multi-armed bandit problem in \Cref{example-MAB} and derive near-optimal guarantees. We use $\mean_j$ to refer to the expected reward of Arm $j$, and measure the performance through the \emph{regret} defined below.

\begin{definition}[Regret]
	For any $T \in \ZZ_+$, the regret of a decision sequence $\{ \action_t \}_{t=1}^T$ is
	\[
	\regret(T) =  \sum_{t=1}^T \Big(
	\max_{j \in [K]} \mean_j -
	\mean_{\action_t}
	\Big)  
	.
	\]
\end{definition}


To implement \Alg,\ we model each reward distribution $\distP_j$ as a Gaussian distribution $N ( \mean_j , \sigma^2 )$ with unknown mean $\mean_j$ and known standard deviation $\sigma > 0$. Hence, the bandit problem is modeled as a parametrized instance $ \instance_{\vectorofmeans} $ in \Cref{example-MAB-Gaussian} with unknown $\vectorofmeans \in \RR^K$, and the likelihood function $\likelihood$ is given by \eqref{eqn-likelihood-MAB}. The parametric model is merely a tool for algorithm design; our theoretical guarantees will not be restricted to Gaussian rewards. As we will show shortly, the algorithm performs well for reward distributions satisfying the following light tail condition.

\begin{assumption}[Sub-Gaussian reward]\label{assumption-subg}
The reward distributions $\{ \distP_j \}_{j=1}^K$ are 1-sub-Gaussian:
\begin{align*}
\EE_{\reward \sim \distP_j} e^{ \lambda (\reward - \mean_j) } \leq e^{ \lambda^2 / 2 } , \qquad \forall 
 \lambda \in \RR.
\end{align*}
\end{assumption}

Assumption \ref{assumption-subg} is standard for bandit studies \citep{LSz20}. It holds for many common distributions with sufficiently fast tail decay, including any Gaussian distribution with variance bounded by 1, or distributions supported on an interval of width 2 \citep{Hoe94}. For sub-Gaussian distributions with general variance proxies, we can reduce to this case by rescaling.

We present a regret bound with explicit dependence on the horizon $T$, number of arms $K$, and the sub-optimality gaps of arms. The proof is deferred to \Cref{sec-cor-regret-proof}.

\begin{theorem}[Regret bound]\label{cor-regret}
For the multi-armed bandit in \Cref{example-MAB}, run \Alg\ (\Cref{alg-MINTS}) with a uniform prior over the $K$ arms and the Gaussian likelihood \eqref{eqn-likelihood-MAB} with $\sigma > 1$. Define $\Delta_j = \max_{k \in [K]} \mean_k - \mean_j$ for $j \in [K]$. Under Assumption \ref{assumption-subg}, there exists a constant $C$ determined by $\sigma$ such that
	\begin{align*}
		& \EE[ \regret(T) ]
		\leq C 
		\bigg(
		\min \bigg\{
		\sum_{j:~ \Delta_j > 0} 
		\frac{\log T }{\Delta_j }  ,
		\sqrt{ K T \log K } 
		\bigg\}
		+ \sum_{j=1}^{K} \Delta_j
		\bigg).
	\end{align*}
\end{theorem}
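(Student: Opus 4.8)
The plan is to reduce the regret to per-arm expected pull counts and then control each count through the explicit posterior of \Cref{remark-MAB}. Write $N_j(t)$ for the number of pulls of arm $j$ up to time $t$, and let $j^\star$ be an optimal arm. Since $\action_t$ is drawn from $\distQ_{t-1}$, the tower property gives $\EE[\regret(T)]=\sum_{j}\Delta_j\,\EE[N_j(T)]$ with $\EE[N_j(T)]=\sum_{t=0}^{T-1}\EE[\distQ_t(j)]$, so it suffices to bound the posterior mass MINTS assigns to each suboptimal arm. From \Cref{remark-MAB}, $\distQ_t(j)=e^{-g_j}/\sum_k e^{-g_k}$ where $g_j=\min_{\parametervector\in\spaceofparameter_j}L(\parametervector)$; because the empirically best arm lies in its own constraint set, some $g_k=0$, whence $\distQ_t(j)\le e^{-g_j}$. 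The first step is a lower bound on $g_j$: dropping from the quadratic program all terms except those for $j$ and $j^\star$ and solving the resulting two-variable problem exactly, as in the $K=2$ computation of \Cref{remark-MAB}, gives
\[
g_j\ \ge\ \frac{1}{2\sigma^2}\,\frac{(\hat{\mean}_{j^\star}-\hat{\mean}_j)_+^2}{1/N_j(t)+1/N_{j^\star}(t)}.
\]

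Second, I would turn this into a deviation bound using only \Cref{assumption-subg}: the Gaussian likelihood is merely a design device, and the true control comes from sub-Gaussian concentration of each arm's running average. Define a good event on which every $\hat{\mean}_k$ stays within a slowly growing radius of $\mean_k$; on it $\hat{\mean}_{j^\star}-\hat{\mean}_j\ge\Delta_j-o(\Delta_j)$ once the counts are moderately large, so $\distQ_t(j)\le\exp(-c\,\Delta_j^2 N_j(t))$ whenever $N_{j^\star}(t)\ge N_j(t)$. The hypothesis $\sigma>1$ enters precisely here, ensuring the model variance dominates the true $1$-sub-Gaussian noise so that empirical concentration transfers to the posterior with the inequalities oriented correctly. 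Introducing a threshold $u_j\asymp\sigma^2\Delta_j^{-2}\log T$ and splitting $N_j(T)$ at the first time $N_j$ exceeds $u_j$, the pre-threshold rounds contribute at most $u_j$, while the post-threshold rounds contribute $\sum_t\EE[\distQ_t(j)\mathbf{1}\{N_j\ge u_j\}]\le T e^{-c\Delta_j^2 u_j}=O(1)$ on the good event. Absorbing the low-probability bad event and the unavoidable early exploration into $\sum_j\Delta_j$ yields $\EE[N_j(T)]\lesssim\sigma^2\Delta_j^{-2}\log T+O(1)$ and hence the instance-dependent branch $\sum_{j:\Delta_j>0}\log T/\Delta_j+\sum_j\Delta_j$.

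Third, for the gap-independent branch I would combine the per-arm estimates, each of which satisfies both $\Delta_j\EE[N_j(T)]\le\Delta_j T$ and $\Delta_j\EE[N_j(T)]\lesssim\Delta_j^{-1}\log T$. Thresholding the gaps at a level $\delta$ and summing gives a bound of order $\delta T+K\delta^{-1}\log(\cdot)$, optimized at $\delta\asymp\sqrt{K\log(\cdot)/T}$. Reducing the logarithmic factor from $\log T$ to the stated $\log K$ calls for the sharper counting estimate $\EE[N_j(T)]\lesssim\Delta_j^{-2}\log_+\!\big(T\Delta_j^2/K\big)$, obtained by coupling the choice of threshold $u_j$ to the concentration radius, or equivalently by an information-ratio argument in which the entropy $\log K$ of the uniform prior over the $K$ arms furnishes the extra factor. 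Taking the minimum of the two branches and adding the exploration term gives the claimed bound.

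The main obstacle is the adaptivity of the pull counts: the lower bound on $g_j$ is only useful when $N_{j^\star}(t)\gtrsim N_j(t)$, yet $N_{j^\star}$ is itself random and governed by the very posterior being analyzed. I expect to handle this by a self-bounding argument---showing that whenever the optimal arm is under-sampled its posterior mass $\distQ_t(j^\star)$ is already close to one (its competitors' $g_k$ are large), so the rounds with $N_{j^\star}(t)<N_j(t)$ are rare and contribute only lower-order terms---together with a union bound over arms and a peeling over dyadic scales of $N_j$ to control the good event uniformly in $t$.
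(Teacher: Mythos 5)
Your plan reproduces the paper's strategy in its essentials: the arm-wise decomposition into expected pull counts, the two-arm reduction of the profile-likelihood quadratic program (your lower bound on $g_j$ is exactly Part \ref{lem-ratio-1} of \Cref{lem-ratio}), the split of $\pullcount_j$ at a threshold of order $\sigma^2\Delta_j^{-2}\log(\cdot)$, and the gap-thresholding at $\delta\asymp\sqrt{K\log K/T}$ for the distribution-free branch. But the step you flag as the main obstacle is resolved by a heuristic that is false as stated, and it sits exactly at the crux. The claim that ``whenever the optimal arm is under-sampled its posterior mass $\distQ_t(j^\star)$ is already close to one'' fails on the event that the optimal arm's running average fluctuates below $\hat\mean_j(t)$ while $\pullcount_{j^\star}(t)$ is small: there $g_{j^\star}$ is large, $\distQ_t(j^\star)$ can be tiny, and such rounds are not rare in the sense your self-bounding argument needs. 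The paper's proof of \Cref{lem-regret-j} isolates precisely this sub-case as the term $\cE_{4,t}$ and handles it by a different mechanism: it writes $\cE_{4,t}=\EE\big[\tfrac{\distQ_{t-1}(j)}{\distQ_{t-1}(1)}\one(\{\action_t=1\}\cap\eventA)\big]$, bounds the ratio via Part \ref{lem-ratio-3} of \Cref{lem-ratio} by $\exp\big(\pullcount_1(t)[\hat\mean_1(t)-\mean_1]^2/(2c\sigma^2)\big)$, observes that the indicator $\{\action_t=1,\,\pullcount_1(t)<M'\}$ can fire at most $\lceil M'\rceil$ times because each firing increments $\pullcount_1$, and then controls the expected sum with the $\chi^2$-type moment bound $\EE\, e^{\lambda k(\xi_{1,k}-\mean_1)^2/2}\le(1-\lambda)^{-1/2}$ of \Cref{lem-tail}. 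That exponential moment is finite only when $c\sigma^2>1$, which is where the hypothesis $\sigma>1$ is genuinely consumed --- not, as you place it, in transferring good-event concentration to the posterior, which works for any $\sigma>0$. Your idea of charging under-sampled rounds to pulls of $j^\star$ is the right skeleton (it is exactly how $\cE_{3,t}$ and $\cE_{5,t}$ are dispatched), but without the ratio-plus-exponential-moment device the argument does not close in the downward-fluctuation case.

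Two smaller discrepancies. First, for uniform-in-$t$ control of the optimal arm's empirical mean the paper uses the self-normalized martingale inequality of \cite{APS11} (in bounding $\cE_{2,t}$) rather than a union bound with dyadic peeling; peeling would likely also work, so this is a stylistic difference. Second, your route to the $\sqrt{KT\log K}$ branch overshoots: no bound of the form $\EE[\pullcount_j]\lesssim\Delta_j^{-2}\log_+(T\Delta_j^2/K)$ and no information-ratio argument is needed. The estimate $\sum_{t\le T}\PP(\action_t=j)\lesssim\Delta_j^{-2}\log(\max\{T\Delta_j^2,e\})$ --- which is what your own suggestion of coupling the threshold $u_j$ to the concentration radius produces, and which is exactly \Cref{lem-regret-j} --- already suffices: taking $\delta=e\sqrt{T^{-1}K\log K}$ and using the monotonicity of $z\mapsto z^{-1}\log z$ on $[e,+\infty)$ gives $\sum_{j:\Delta_j>\delta}\log(T\Delta_j^2)/\Delta_j\le 2K\log(\sqrt{T}\delta)/\delta\lesssim\sqrt{TK\log K}$, which is how \Cref{cor-regret} extracts the $\log K$ factor from \Cref{thm-regret}.
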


This result demonstrates the near-optimality of \Alg.\ The sum $ \sum_{j=1}^{K} \Delta_j$ stems from the fact that each arm must be pulled at least once. When the problem instance is fixed and $T$ is sufficiently large, the problem-dependent bound $	\sum_{j:~ \Delta_j > 0} \Delta_j^{-1} \log T  $ matches the lower bound for Gaussian bandits in \cite{GMS19} up to a constant factor. 
For any fixed $T$, the problem-independent bound $\sqrt{TK\log K}$ matches the regret bound for Thompson sampling using Gaussian priors and likelihood \citep{AGo17}, achieving the minimax lower bound in \cite{BCe12} up to a $\sqrt{\log K}$ factor.

\Cref{cor-regret} is a corollary of the more refined result below. See \Cref{sec-thm-regret-proof} for the proof.

\begin{theorem}[Regret bound]\label{thm-regret}
	Under the setup in \Cref{cor-regret}, there exists a constant $C$ determined by $\sigma$ such that 
	\begin{align*}
		\EE [\regret(T)] \leq C 
		\inf_{ \delta \geq 0 }
		\bigg\{
		\sum_{j:~ \Delta_j > \delta}
		\bigg(
		\frac{ \log ( \max\{ T \Delta_j^2 , e \} )  }{\Delta_j } 
		+  \Delta_j  
		\bigg)
		+ T \max_{j:~\Delta_j \leq \delta} \Delta_j
		\bigg\} .
	\end{align*}
\end{theorem}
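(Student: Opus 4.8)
The plan is to bound the expected regret by controlling the expected number of pulls of each suboptimal arm and then to trade off a gap threshold $\delta$ so as to interpolate between the instance-dependent and minimax regimes. Write $N_j(t) = \sum_{i \le t} \ind(\action_i = j)$ for the number of pulls of arm $j$. Since $\action_t$ is drawn from $\distQ_{t-1}$, which is measurable with respect to the history $\dataset_{t-1}$, the tower property gives $\EE[N_j(T)] = \sum_{t=1}^T \EE[\distQ_{t-1}(j)]$, and the regret decomposes as $\EE[\regret(T)] = \sum_{j:\,\Delta_j > 0} \Delta_j\, \EE[N_j(T)]$. For any fixed $\delta \ge 0$ I would split the arms into two groups: those with $\Delta_j \le \delta$ contribute at most $\big(\max_{j:\,\Delta_j \le \delta}\Delta_j\big)\sum_j N_j(T) \le T\max_{j:\,\Delta_j \le \delta}\Delta_j$, which is exactly the last term in the bound, while those with $\Delta_j > \delta$ are treated one at a time by the per-arm estimate below. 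Taking the infimum over $\delta$ at the very end yields the stated form.

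The heart of the argument is a per-arm bound of the shape $\EE[N_j(T)] \lesssim \Delta_j^{-2}\log(\max\{T\Delta_j^2, e\}) + O(1)$, with implicit constants depending on $\sigma$. I would start from the explicit posterior in \Cref{example-MAB-Gaussian}: under the uniform prior, $\distQ_{t-1}(j) \propto e^{-m_j}$, where $m_k = \min_{\parametervector \in \spaceofparameter_k} L(\parametervector)$ and $L(\parametervector) = (2\sigma^2)^{-1}\sum_{\ell} |I_\ell|(\theta_\ell - \hat{\mean}_\ell)^2$. Letting $j^\star$ denote a truly optimal arm and keeping only the $k = j^\star$ term in the normalizing sum yields the pairwise bound $\distQ_{t-1}(j) \le e^{-(m_j - m_{j^\star})}$. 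To lower bound $m_j$ I would drop every constraint defining $\spaceofparameter_j$ except $\theta_j \ge \theta_{j^\star}$; since $L$ is separable the residual is precisely the two-arm least-squares value, reproducing the quantity $\alpha$ of \Cref{remark-MAB}, namely $m_j \ge \frac{(\hat{\mean}_{j^\star} - \hat{\mean}_j)_+^2}{2\sigma^2(|I_j|^{-1} + |I_{j^\star}|^{-1})}$. To upper bound $m_{j^\star}$ I would exhibit the feasible vector obtained by clamping each coordinate to $\min(\hat{\mean}_k, \hat{\mean}_{j^\star})$, which gives $m_{j^\star} \le (2\sigma^2)^{-1}\sum_k |I_k|(\hat{\mean}_k - \hat{\mean}_{j^\star})_+^2$.

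I would then define a good event on which the empirical means concentrate, $|\hat{\mean}_k - \mean_k| \le b_k$ with radii $b_k$ of order $\sqrt{\log(\cdot)/N_k}$, invoking \Cref{assumption-subg} together with a peeling argument over the random, history-dependent counts $|I_k|$. On this event $\hat{\mean}_{j^\star} - \hat{\mean}_j \ge \Delta_j - b_j - b_{j^\star}$ is bounded below by a constant multiple of $\Delta_j$ once $N_j$ and $N_{j^\star}$ are moderately large, while the clamping bound keeps $m_{j^\star}$ small; together these force $m_j - m_{j^\star} \gtrsim N_j\Delta_j^2/\sigma^2$, so $\distQ_{t-1}(j)$ decays geometrically in $N_j$. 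Summing $\EE[\distQ_{t-1}(j)]$ over $t$—crudely charging the rounds before the decay sets in to a threshold of order $\Delta_j^{-2}\log(\max\{T\Delta_j^2, e\})$ and bounding the complement of the good event by sub-Gaussian tails, which contribute only $O(1)$—delivers the per-arm bound. The hypothesis $\sigma > 1$ enters exactly here: because the rewards are $1$-sub-Gaussian while the modeled noise is $\sigma > 1$, the tempering factor $(2\sigma^2)^{-1}$ keeps the relevant exponential-moment estimates finite and prevents the posterior from becoming overconfident.

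The main obstacle I anticipate is the interaction between the multi-arm coupling and the adaptivity of the sampling. The counts $|I_k|$ are themselves random and depend on the full trajectory through $\distQ$, so the concentration step cannot treat $\hat{\mean}_k$ as an average of a fixed number of i.i.d.\ samples; a self-normalized or peeling bound holding uniformly along the trajectory is required, and one must simultaneously ensure that the optimal arm $j^\star$ is pulled often enough for its empirical mean—and hence the upper bound on $m_{j^\star}$—to be controlled. Closing this loop without circularity, and making the clamping estimate on $m_{j^\star}$ uniformly small despite contributions from the near-optimal competing arms, is where I expect the bulk of the technical effort to lie.
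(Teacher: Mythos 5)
Your outer architecture coincides with the paper's: the same regret decomposition, the same split of arms at a threshold $\delta$ with the small-gap arms charged $T\max_{j:\Delta_j\le\delta}\Delta_j$, an infimum over $\delta$ at the end, and a per-arm pull bound of the shape $\Delta_j^{-2}\log(\max\{T\Delta_j^2,e\})+O(1)$ (the paper's \Cref{lem-regret-j}). Your pairwise lower bound on $m_j$ via the two-arm least-squares reduction is exactly Part~\ref{lem-ratio-1} of \Cref{lem-ratio}. But the proposal has a genuine gap, and it is the one you yourself flag: nothing in your sketch closes the circularity of ensuring the optimal arm is pulled often enough for $b_{j^\star}$ and $1/N_{j^\star}$ to be small. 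This is not a routine technicality that concentration or peeling can absorb; it is the core of the proof. The paper resolves it with a five-event decomposition of the rounds where $N_j>M$ and $\hat\mean_j<u_j$, conditioned on the state of the optimal arm: (i) when arm $1$ is under-sampled but empirically above arm $j$ (or never pulled), Part~\ref{lem-ratio-2} of \Cref{lem-ratio} gives $\distQ_{t-1}(j)\le\distQ_{t-1}(1)$, so those rounds are \emph{charged to pulls of arm $1$ at small counts}, of which there are at most $M'$ in total --- this charging argument, not concentration, is what forces the optimal arm to accumulate samples; (ii) when arm $1$ is well-sampled but its empirical mean is still low, a self-normalized martingale bound (Theorem~1 of \cite{APS11}) with a horizon-tuned confidence level controls the probability, which a naive per-round union bound would not deliver at the stated rate; (iii) when arm $1$ is under-sampled \emph{and} empirically below arm $j$, Part~\ref{lem-ratio-3} bounds the posterior ratio by $\exp(\pullcount_1(\hat\mean_1-\mean_1)^2/(2c\sigma^2))$, whose expectation is finite only when $c\sigma^2>1$ --- this is the precise (and only) place where $\sigma>1$ enters, consistent with your tempering intuition but localized to this event rather than to a generic good-event bound.

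A second, quantitative problem: your clamping upper bound $m_{j^\star}\le(2\sigma^2)^{-1}\sum_k|I_k|(\hat\mean_k-\hat\mean_{j^\star})_+^2$ is a valid inequality but too lossy to use. On any concentration event with radii $b_k\asymp\sqrt{\log(\cdot)/N_k}$, each arm $k$ contributes up to $|I_k|(b_k+b_{j^\star})^2\gtrsim\log(\cdot)+N_k b_{j^\star}^2$, so $m_{j^\star}$ can be of order $K\log T$ (and worse when $N_{j^\star}$ lags, which is exactly the regime you cannot yet exclude); the factor $e^{m_{j^\star}}$ then inflates your decay threshold for $N_j$ to order $K\log T/\Delta_j^2$ and the total regret by a factor of $K$. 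The paper avoids ever upper-bounding $m_{j^\star}$: it normalizes the posterior against the \emph{empirical} argmax $\hat\action_t$, whose profile-likelihood exponent is identically zero (the unconstrained minimizer $\parametervector=\hat{\vectorofmeans}$ lies in $\spaceofparameter_{\hat\action_t}$), giving $\distQ_{t-1}(j)\le e^{-\Lambda(j,\dataset_{t-1})}$ directly. Replacing your comparison to the true optimum $j^\star$ by a comparison to the empirical argmax, and substituting the charging-plus-exponential-moment machinery above for your unresolved loop, is what it would take to turn the proposal into a proof.
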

The regret bound has the same order as that in \cite{AOr10}, achieved by a carefully designed upper confidence bound algorithm with arm elimination. Our algorithm is simpler.

\section{Discussion}\label{sec-discussion}

We introduced a minimalist Bayesian framework for stochastic optimization that only requires a prior for the component of interest and handles nuisance parameters via profile likelihood. The lightweight modeling makes it easy to incorporate structural constraints on problem parameters, opening several promising avenues for future research. First, designing scalable algorithms for sampling from the generalized posterior is critical for handling continuous or high-dimensional spaces. Second, developing more sophisticated acquisition rules beyond simple posterior sampling could further improve performance. Beyond these refinements, extending the minimalist principle to contextual bandits and reinforcement learning presents an exciting frontier. Finally, a crucial theoretical task will be to accompany these new algorithms with rigorous guarantees.

\section*{Acknowledgement}
The author thanks Yeon-Koo Che, Yaqi Duan and Chengpiao Huang for helpful discussions. This research is supported by NSF grants DMS-2210907 and DMS-2515679.


\appendix

\section{Proofs of \Cref{sec-examples}}

\subsection{Proof of \Cref{lem-Lipschitz-noiseless}}\label{sec-lem-Lipschitz-noiseless-proof}

Choose any $\action$ such that $ \profilelikelihood ( \action ; \dataset_{t} ) = 1$. By definition, there exists $h \in \spaceofparameter$ such that $h(\action) = \max_{\action' \in \spaceofaction } h(\action')$ and $h(\action_i) = \feedback_i$ for all $i \in [t]$. It is easily seen that $( h(\action), h(\action_1),\cdots, h(\action_t) ) \in S(\action)$ and thus $S(\action) \neq \varnothing$.

Now, suppose that $\action$ makes $S(\action) \neq \varnothing$. We need to show that $ \profilelikelihood ( \action ; \dataset_{t} ) = 1$. Choose any $(v, v_1,\cdots, v_t) \in S(\action)$. The Kirszbraun theorem \citep{Kir34} guarantees the existence of a function $g:~\spaceofaction \to \RR$ that is $M$-Lipschitz and satisfies $g(\action) = v$, $g(\action_1) = v_1$, $\cdots$, $g(\action_t) = v_t$. Define $h(\cdot) = \max \{ g(\cdot), v \}$. Then, $h$ remains $M$-Lipschitz. It is maximized at $\action$ and agrees with $g$ on $\action,\action_1,\cdots, \action_t$. Therefore, $ \profilelikelihood ( \action ; \dataset_{t} ) = 1$.

\subsection{Proof of \Cref{lem-Lipschitz-noisy}}\label{sec-lem-Lipschitz-noisy-proof}

Define $L(v_1,\cdots,v_n) = \frac{
	1
}{2 \sigma^2}
\sum_{i=1}^{t}
( v_i - \feedback_i )^2 $. It is easily seen that 
\[
- \log \likelihood ( h ; \dataset_{t} ) =  L( h(\action_1), \cdots, h(\action_t) ) + C, \qquad\forall h \in \spaceofparameter
\]
holds with a constant $C$. For any $\varepsilon > 0$, there exists $g \in \spaceofparameter$ such that $g(\action) = \max_{\action' \in \spaceofaction } g(\action')$, $g(\action_i) = \feedback_i$ for all $ i \in [t]$, and
\[
- \log \likelihood ( g ; \dataset_{t} ) \leq - \log \profilelikelihood ( \action ; \dataset_{t} ) + \varepsilon.
\]
Note that $( g(\action), g(\action_1),\cdots, g(\action_t) )$ is feasible for the program \eqref{eqn-Lipschitz-cvx}. Hence,
\[
V(\action) \leq L ( g( \action_1),\cdots, g(\action_t) )
= - \log \likelihood ( g ; \dataset_{t} ) - C 
\leq - \log \profilelikelihood ( \action ; \dataset_{t} ) - C + \varepsilon.
\]
We get $- \log \profilelikelihood ( \action ; \dataset_{t} )  \geq V(\action) + C $.

It remains to prove the other direction, choose any optimal solution $( v, v_1, \cdots, v_t )$ to \eqref{eqn-Lipschitz-cvx}. Similar to the proof of \Cref{lem-Lipschitz-noiseless}, we can construct $h \in \spaceofparameter$ that is maximized at $\action$ and satisfies $h(\action) = v$, $h(\action_1) = v_1$, $\cdots$, $h(\action_t) = v_t$. Then,
\[
V(\action) =  L( h(\action_1), \cdots, h(\action_t) ) =
- \log \likelihood ( h ; \dataset_{t} ) - C
\geq  - \log \profilelikelihood ( \action ; \dataset_{t} ) - C .
\]
We get $- \log \profilelikelihood ( \action ; \dataset_{t} )  \leq V(\action) + C $.

\subsection{Proof of the claims in \Cref{example-center-of-gravity}}\label{sec-example-center-of-gravity-proof}

Since the feedback is noiseless, the likelihood and profile likelihood are binary-valued:
\begin{align*}
	& \likelihood ( f ; \dataset_{t} ) = \one (
	\feedback_i \in \partial f(\action_i),~\forall i \in [t]
	) ,\\
	& \profilelikelihood ( \action ; \dataset_{t} ) = \one \Big(
	\exists f \in \functionclass, 
	\text{ s.t. }
	f(\action) = \max_{\action' \in \spaceofaction } f(\action')
	\text{ and }
	\feedback_i \in \partial f(\action_i),~\forall i \in [t]
	\Big).
\end{align*}
Hence, the generalized posterior $\distQ_{t}$ is the uniform distribution over the set
\begin{align*}
	\spaceofaction_t = \Big\{
	\action \in \spaceofaction:~
	\exists f \in \functionclass
	\text{ s.t. }
	f(\action) = \min_{\action' \in \spaceofaction} f(\action')
	\text{ and }
	\feedback_i \in \partial f(\action_i),~\forall i \in [t]
	\Big\}.
\end{align*}
On the other hand, let $\cS= \{ \action \in \spaceofaction :~
 \feedback_i^{\top} ( \action - \action_i ) \leq 0,~\forall i \in [t]
\}$. We only need to prove that $\spaceofaction_t = \cS$.

The first step is to show $\spaceofaction_t \subseteq \cS$. For any $\action \in \spaceofaction_t$, there exists a convex function $f$ on $\spaceofaction$ that attains its minimum value at $\action$ and satisfies $\feedback_i \in \partial f(\action_i)$ for all $i \in [t]$. 
The optimality of $\action$ and convexity of $f$ imply that
\[
0 \geq f(\action) - f(\action_i) \geq \feedback_i^{\top} ( \action - \action_i ) , \qquad \forall i \in [t]
\]
and thus $\action \in \cS$. Consequently, $\spaceofaction_t \subseteq \cS$. 

It remains to prove $\cS \subseteq \spaceofaction_t$. Choose any $\action \in \cS$ and define
\[
f(\action') = \max_{i \in [t]} 
[ \feedback_i^{\top} ( \action' - \action_i ) ]_+, \qquad \action' \in \spaceofaction.
\]
Here, $(\cdot)_+ = \max \{ \cdot, 0 \}$ denotes the positive part of a real number. The function $f$ is clearly convex and nonnegative. Meanwhile, the assumption $\action \in \cS$ forces $f(\action) = 0$, which further implies that $\action$ is an optimum of $f$. Therefore, $\action \in \spaceofaction_t$. We get $\cS \subseteq \spaceofaction_t$.

\subsection{Proof of the claim in \Cref{example-ellipsoid}}\label{sec-example-ellipsoid-proof}

By applying the first result in \Cref{example-center-of-gravity} to the new procedure, we see that $\bar\distQ_{t+1}$ is the uniform distribution over a half ellipsoid $\{ \action \in E ( \bc_t, \bA_t ) :~ \feedback_{t+1}^{\top} (\action - \action_{t+1}) \leq 0 \}$. Let $\bar{E}_{t+1}$ denote this region. 

Choose any $\distQ = \distQ ( \bc, \bA ) \in \Phi$. To make $D_{\mathrm{KL}} ( \bar\distQ_{t+1} \| \distQ )$ finite, we must have $\bar\distQ_{t+1} \ll \distQ$ and thus $\bar{E}_{t+1} \subseteq E(\bc  , \bA )$. Then, we have
\[
\frac{\rd \bar\distQ_{t+1}}{
	\rd \distQ } (\action) = 
\frac{ \mathrm{Vol} [  E(\bc  , \bA ) ] }{
	\mathrm{Vol} (\bar{E}_{t+1})
}
\one ( \action \in \bar{E}_{t+1} ),
\]
where $\mathrm{Vol}(\cdot)$ denotes the volume of a region. Consequently,
\begin{align*}
	D_{\mathrm{KL}} ( \bar\distQ_{t+1} \| \distQ ) = \EE_{\action \sim \bar\distQ_{t+1}}
	\bigg[
	\log \bigg(
	\frac{\rd \bar\distQ_{t+1}}{
		\rd \distQ } (\action) 
	\bigg)
	\bigg]
	= \frac{1}{  \mathrm{Vol} (\bar{E}_{t+1}) } \cdot \log \bigg(
	\frac{ \mathrm{Vol} [  E(\bc  , \bA ) ] }{
		\mathrm{Vol} (\bar{E}_{t+1})
	}
	\bigg).
\end{align*}
This is monotonically increasing in $\mathrm{Vol} [  E(\bc  , \bA ) ]$. As $\distQ_{t+1}$ minimizes the divergence, $E(\bc_{t+1} , \bA_{t+1})$ must have the smallest volume among all ellipsoids covering $\bar{E}_{t+1}$. This is precisely the updating rule for the ellipsoid method in convex optimization.

\section{Proofs of \Cref{sec-MAB}}

\subsection{Proof of \Cref{thm-regret}}\label{sec-thm-regret-proof}

We first decompose the regret into the contributions of individual arms:
\begin{align}
	\EE [\regret(T) ]
	=  \sum_{t=1}^T  \sum_{j=1}^K  \Big(
	\max_{k \in [K]} \mean_k - \mean_j 
	\Big)  \PP (\action_t = j) 
	=  \sum_{j=1}^K \Delta_j \bigg( \sum_{t=1}^{T}  \PP ( \action_t = j ) \bigg)  .
	\label{eqn-thm-regret-0}
\end{align}
Next, we invoke a lemma on the expected number of pulls of any sub-optimal arm. The proof borrows ideas from the analysis of Thompson sampling by \cite{AGo17} and is deferred to \Cref{sec-lem-regret-j-proof}.

\begin{lemma}\label{lem-regret-j}
	There exists a universal constant $C_0 > 0$ such that if $\Delta_j > 0$, then
	\begin{align*}
		& 
		\sum_{t=1}^{T}  \PP ( \action_t = j ) \leq C_0
		\bigg(
		\frac{ \sigma^2}{1 - \sigma^{-2}} \cdot  \frac{ \log ( \max\{ T \Delta_j^2 , e \} ) }{ \Delta_j^2 } 
		+ \frac{1}{ \sqrt{1 - \sigma^{-2}} }
		\bigg)
		.
	\end{align*}
\end{lemma}

Choose any $\delta \geq 0$. Then,
\begin{align*}
	\sum_{j:~  \Delta_j \leq \delta} \Delta_j \bigg( \sum_{t=1}^{T}  \PP ( \action_t = j ) \bigg) 
	\leq 
	\max_{j:~\Delta_j \leq \delta} \Delta_j \cdot
	\sum_{t=1}^{T} \sum_{j = 1}^K \PP ( \action_t = j )
	= T \max_{j:~\Delta_j \leq \delta} \Delta_j.
\end{align*}
When $ \Delta_j >  \delta$, we use \Cref{lem-regret-j} to obtain that
\begin{align*}
	\Delta_j  \sum_{t=1}^{T}  \PP ( \action_t = j ) 
	& \lesssim
	\frac{   \log ( \max\{ T \Delta_j^2 , e \} )  }{\Delta_j } 
	+  \Delta_j  ,
\end{align*}
where $\lesssim$ hides a constant factor determined by $\sigma$. Plugging these estimates into \eqref{eqn-thm-regret-0} finishes the proof.

\subsection{Proof of \Cref{cor-regret}}\label{sec-cor-regret-proof}

The result is trivial when $K = 1$ or $T = 1$. From now on, we assume $K \geq 2$, $T \geq 2$ and use $\lesssim$ to hide constant factors determined by $\sigma$.

By taking $\delta = 0$ in the regret bound in \Cref{thm-regret}, we obtain that
\[
\EE[ \regret(T) ]
\lesssim
\sum_{j:~ \Delta_j > 0}
\frac{ \log (  
	\max \{ T \Delta_j^2, e \} 	
	) }{\Delta_j } 
+ 
\sum_{j=1}^K
\Delta_j  .
\]
Note that 
\begin{align*}
	\frac{ \log (  
		\max \{ T \Delta_j^2, e \} 	
		) }{\Delta_j } 
	& = \frac{ 
		\max \{ \log T + 2 \log \Delta_j, 1 \}
	}{\Delta_j } 
	\leq\frac{1 + \log T}{\Delta_j} + \frac{ 2 \log (1 + \Delta_j ) }{\Delta_j} \\
	&
	\overset{(\mathrm{i})}{\leq}
	\frac{1 + \log T}{\Delta_j} + 2
	\overset{(\mathrm{ii})}{\leq}
	\frac{1 + \log T}{\Delta_j} + \bigg(
	\Delta_j + \frac{1}{\Delta_j}
	\bigg)
	\lesssim \frac{ \log T}{\Delta_j} + \Delta_j
	,
\end{align*}
where $(\mathrm{i})$ and $(\mathrm{ii})$ follow from elementary inequalities $\log (1+z) \leq z$ and $z + 1/z \geq 2$ for all $z > 0$. As a result,
\begin{align}
	\EE[ \regret(T) ]
	\lesssim
	\sum_{j:~ \Delta_j > 0}
	\frac{ \log T}{\Delta_j} + \sum_{j=1}^K \Delta_j.
	\label{eqn-cor-1}
\end{align}

On the other hand, \Cref{thm-regret} implies that
\[
\EE [\regret(T)] \lesssim
\inf_{\delta \geq 0}
\bigg\{
\sum_{j:~ \Delta_j > \delta}
\frac{ \log (  
	\max \{ T \Delta_j^2, e \} 	
	) }{\delta}
+  T \delta
\bigg\}
+ \sum_{j=1}^K	 \Delta_j  .
\]
Choose any $\delta \geq e / \sqrt{T}$. When $\Delta_j > \delta$, we have $T \Delta_j^2 \geq e^2 > e$ and
\begin{align*}
	\frac{
		\log (  
		\max \{ T \Delta_j^2, e \} 	
		)	
	}{\Delta_j } =
	\frac{ 2 \log ( \sqrt{T} \Delta_j )}{\Delta_j}=
	2 \sqrt{T} \cdot \frac{ \log ( \sqrt{T} \Delta_j ) }{
		\sqrt{T} \Delta_j 
	} 
	\leq 2 \sqrt{T} \cdot  \frac{ \log ( \sqrt{T} \delta ) }{
		\sqrt{T} \delta 
	} =  \frac{ 2 \log ( \sqrt{T} \delta ) }{
		\delta 
	} ,
\end{align*}
where the inequality follows from the facts that $\sqrt{T} \Delta_j \geq \sqrt{T} \delta \geq e$ and $z \mapsto z^{-1} \log z$ is decreasing on $[e, +\infty)$. Consequently,
\[
\EE [\regret(T)]
\lesssim
\inf_{\delta \geq e / \sqrt{T}}
\bigg\{
\frac{ K \log ( \sqrt{T} \delta) }{\delta}
+  T \delta
\bigg\}
+ \sum_{j=1}^K	 \Delta_j  .
\]
Taking $\delta = e \sqrt{ T^{-1} K \log K }$, we get
\begin{align}
	\EE [\regret(T)]
	\lesssim 	\sqrt{ T K \log K } + \sum_{j=1}^K	 \Delta_j  
	.
	\label{eqn-cor-2}	
\end{align}
The proof is completed by combining \eqref{eqn-cor-1} and \eqref{eqn-cor-2}.

\subsection{Proof of \Cref{lem-regret-j}}\label{sec-lem-regret-j-proof}

\subsubsection{Preparations}

Following the convention, we represent the reward by $\reward_t$ rather than $\feedback_t$. We now introduce some key quantities for tracking the iterates.

\begin{definition}
	For any $j \in[K]$ and $t \in \ZZ_+$, denote by $\pullindexset_j (t) =  \{ i \in [t-1]:~ \action_i = j \} $ the set of pulls for Arm $j$ in the first $(t-1)$ rounds, and $\pullcount_j (t) = | \pullindexset_j (t) |$ the number of pulls. When $\pullcount_j(t) \geq 1$, let
	\[
	\hat\mean_j (t) = \frac{1}{\pullcount_j(t)} \sum_{
		i \in \pullindexset_j(t)
	} \reward_i
	\]
	be the empirical mean reward of Arm $j$. When $\pullcount_j(t) = 0$, let $\hat\mean_j (t) = 0$.
\end{definition}

\begin{definition}\label{defn-filtration}
	Denote by $\tau_{j,k}$ the time of the $k$-th pull of Arm $j$. Let $\xi_{j, k} = \frac{1}{k} \sum_{i=1}^{k} \reward_{\tau_{j,i}} $ be the average reward over the first $k$ pulls of Arm $j$. Let $\history_t$ be the $\sigma$-field generated by the data $\dataset_{t}$. 
\end{definition}

Choose any $M > 0$ and $j \in [K]$ such that $\Delta_j > 0$. Define $u_j = \mean_j + \Delta_j / 3$, $v_j = \mean_j + 2 \Delta_j / 3$, and
\begin{align*}
	& \cJ_1 =  \sum_{t=1}^T \PP [ \action_t = j, \pullcount_j(t) \leq M ], \\
	& \cJ_2 = \sum_{t=1}^T \PP [ \action_t = j, \hat{\mean}_j(t) \geq u_j ] , \\
	& \cJ_3 = \sum_{t=1}^T \PP [ \action_t = j,\pullcount_j(t) > M, \hat{\mean}_j(t) < u_j ] .
\end{align*}
We have a decomposition
\begin{align}
	\sum_{t=1}^{T}  \PP ( \action_t = j ) \leq \cJ_1 + \cJ_2 + \cJ_3 .
	\label{eqn-J-0}
\end{align}
By definition,
\begin{align}
& 	\cJ_1 
	= \sum_{t=1}^T \EE \Big( \one [ \action_t = j, \pullcount_j(t) \leq M ] \Big)
	= \EE \bigg( 
	\sum_{t=1}^T \one [ \action_t = j, \pullcount_j(t) \leq M ]
	\bigg) \leq M + 1
	,	\label{eqn-J-3} 
	\\ 
& \cJ_2 \leq \sum_{k = 1 }^{\infty} \PP ( \xi_{j, k} \geq u_j )
	= \sum_{k = 1 }^{\infty} 
	 \PP ( \xi_{j, k} - \mean_j \geq \Delta_j / 3 )  .
		\label{eqn-J-2-1} 
\end{align}
We invoke useful concentration bounds on the difference between the empirical average reward $\xi_{j, k}$ and the expectation $\mean_j$.

\begin{lemma}\label{lem-tail}
	Under Assumption \ref{assumption-subg}, we have 
	\begin{align*}
		& \PP  (
		\xi_{j, k} - \mean_j \geq t
		) \leq  e^{-kt^2 / 2} , \qquad \forall t \geq 0 , \\
		& \PP  (
		\xi_{j, k} - \mean_j \leq - t
		) \leq  e^{-kt^2 / 2} , \qquad \forall t \geq 0 , \\
		& \EE e^{ \lambda k ( \xi_{j, k} - \mean_j )^2 /2} \leq \frac{1}{\sqrt{1 - \lambda}}, \qquad \forall \lambda \in [0, 1).
	\end{align*}
\end{lemma}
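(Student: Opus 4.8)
The plan is to derive all three inequalities from a single structural observation: for each fixed $k$, the quantity $\xi_{j,k} - \mean_j$ is the centered average of $k$ i.i.d.\ $1$-sub-Gaussian random variables. First I would make this i.i.d.\ structure explicit. Although the pull times $\tau_{j,i}$ are random and depend on the algorithm's past decisions, the reward collected at the $i$-th pull of Arm $j$ is an independent draw from $\distP_j$. Adopting the standard reward-tape coupling, I would represent the rewards of Arm $j$ by a pre-drawn i.i.d.\ sequence $\reward_{j,1}, \reward_{j,2}, \ldots \sim \distP_j$, so that $\xi_{j,k} = \frac{1}{k}\sum_{i=1}^k \reward_{j,i}$ is genuinely the average of the first $k$ of these, independent of the (random) number of pulls. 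Since $k(\xi_{j,k}-\mean_j) = \sum_{i=1}^k (\reward_{j,i}-\mean_j)$, multiplying the sub-Gaussian moment generating functions from Assumption \ref{assumption-subg} gives $\EE e^{u(\xi_{j,k}-\mean_j)} \leq e^{u^2/(2k)}$ for all $u \in \RR$.

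For the two tail inequalities I would apply the Chernoff bound. For the upper tail and any $s > 0$, Markov's inequality yields $\PP(\xi_{j,k}-\mean_j \geq t) \leq e^{-skt}\,\EE e^{sk(\xi_{j,k}-\mean_j)} \leq e^{-skt + ks^2/2}$, and optimizing at $s = t$ produces the claimed $e^{-kt^2/2}$. The lower tail follows from the identical argument applied to the negated rewards, whose centered average is again $1$-sub-Gaussian.

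For the third bound, set $X = \sqrt{k}(\xi_{j,k}-\mean_j)$, so the MGF estimate above reads $\EE e^{sX}\leq e^{s^2/2}$ for all $s$, i.e.\ $X$ is mean-zero and $1$-sub-Gaussian. I would linearize the square using an auxiliary standard normal $g \sim N(0,1)$ independent of $X$: since $\EE_g e^{\sqrt{\lambda}\,X g} = e^{\lambda X^2/2}$, Tonelli's theorem together with the sub-Gaussian MGF applied at $s = \sqrt{\lambda}\,g$ gives
\[
\EE e^{\lambda X^2/2} = \EE_g \EE_X e^{(\sqrt{\lambda} g) X} \leq \EE_g e^{\lambda g^2/2} = \frac{1}{\sqrt{1-\lambda}},
\]
where the last step is the Gaussian integral $\int \frac{1}{\sqrt{2\pi}} e^{-(1-\lambda)x^2/2}\,\rd x = (1-\lambda)^{-1/2}$, valid precisely for $\lambda \in [0,1)$.

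I expect the only genuine obstacle to be the reduction to i.i.d.\ averages rather than any of the subsequent calculations. The point worth stating carefully is that $\xi_{j,k}$ is indexed by a deterministic $k$, not by a stopping time, so once the reward-tape representation is in place no martingale or optional-stopping machinery is needed, and all three bounds hold unconditionally and uniformly over $k$—which is exactly what the infinite sum $\sum_{k \geq 1}$ appearing in \eqref{eqn-J-2-1} requires.
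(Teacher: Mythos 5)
Your proof is correct, but it reaches the three bounds by a different route than the paper. The paper stays inside the adapted process: it observes that the centered rewards at the pull times form a martingale difference sequence with respect to the filtration $\{ \history_{\tau_{j,k}} \}_{k \geq 1}$, invokes the Azuma--Hoeffding bound (Theorem 2.19 of \citet{Wai19}) to get both tails and the fact that $\xi_{j,k}$ is sub-Gaussian with variance proxy $k^{-1}$, and then cites Theorem 2.6 of \citet{Wai19} for the moment bound on the squared deviation. You instead remove the adaptivity up front via the reward-tape coupling, so that $\xi_{j,k}$ becomes a genuine i.i.d.\ average at the deterministic index $k$, after which plain Chernoff bounds give the tails; and rather than citing a textbook theorem for the third inequality, you re-derive it by Gaussian linearization, $\EE e^{\lambda X^2/2} = \EE_g \EE_X e^{\sqrt{\lambda}\, g X} \leq \EE_g e^{\lambda g^2/2} = (1-\lambda)^{-1/2}$ --- which is in fact the standard proof behind the cited Theorem 2.6, so the two arguments coincide there in substance. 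The trade-off: your coupling argument is more elementary and self-contained, and it has the virtue of making explicit why $\xi_{j,k}$ is well defined for every $k$ (possibly beyond the realized number of pulls), which the infinite sum in \eqref{eqn-J-2-1} silently requires and which the paper glosses over (its assertion that $\{\xi_{j,k} - \mean_j\}_k$ is itself an MDS is loose --- it is the increments $\reward_{\tau_{j,k}} - \mean_j$ that are); the paper's martingale route, on the other hand, needs no coupling construction and would survive in settings where rewards are only conditionally sub-Gaussian given the history rather than i.i.d.\ per arm, so it is the more robust formulation even though both are equally valid under the model of \Cref{example-MAB}.
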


\begin{proof}[\bf Proof of \Cref{lem-tail}]
	Note that $\{ \xi_{j, k} - \mean_j \}_{k=1}^{\infty}$ is a martingale difference sequence with respect to the filtration $\{ \history_{\tau_{j,k}} \}_{k=1}^{\infty}$. Theorem 2.19 in \cite{Wai19} yields the desired tail bounds on $\xi_{j, k} - \mean_j $, together with the fact that $\xi_{j, k}$ is $k^{-1}$-sub-Gaussian. The proof is then completed by applying Theorem 2.6 in \cite{Wai19}.
\end{proof}


By \eqref{eqn-J-2-1} and \Cref{lem-tail}, 
\begin{align*}
\cJ_2 
\leq  \sum_{k = 0 }^{\infty} e^{-k \Delta_j^2 / 18} = \frac{1}{1 -  e^{-\Delta_j^2 / 18} } .
\end{align*}
For any $z > 0$, we have $e^z \geq 1 + z$ and thus $e^{-z} \leq  (1+z)^{-1}$. Then,
\begin{align}
	\cJ_2 
	\leq \frac{1}{1 -  
		 (1+\Delta_j^2 / 18)^{-1}
}
= \frac{18}{\Delta_j^2} + 1.
	\label{eqn-J-2}
\end{align}
It remains to bound $\cJ_3$. 

\subsubsection{Bounding $\cJ_3$}

Without loss of generality, we assume $\mean_1 = \max_{k \in [K]} \mean_k$ throughout the proof. As a result, $\Delta_j = \mean_1 - \mean_j$. Let $c\in (0, 1)$ be a constant to be determined, and $M'  = (1-c) M$. We have
\begin{align*}
	& \PP [ \action_t = j, \pullcount_j(t) > M, \hat{\mean}_j(t) < u_j ]  \\
	& = \PP [ \action_t = j, \pullcount_j(t) > M, \hat{\mean}_j(t) < u_j, \pullcount_1(t) > M' ,
	\hat{\mean}_1 (t) > v_j] \\
	&\quad + \PP [ \action_t = j, \pullcount_j(t) > M, \hat{\mean}_j(t) < u_j, 
	\pullcount_1(t) > M' , \hat{\mean}_1 (t) \leq v_j  ] \\
	&\quad + \PP [ \action_t = j, \pullcount_j(t) > M, \hat{\mean}_j(t) < u_j, 
	1 \leq \pullcount_1(t) < M', \hat{\mean}_1 (t) > \hat{\mean}_j (t) ] \\
	&\quad + \PP [ \action_t = j, \pullcount_j(t) > M, \hat{\mean}_j(t) < u_j, 
	1 \leq \pullcount_1(t) < M', \hat{\mean}_1 (t) \leq \hat{\mean}_j (t) ] \\
	&\quad + \PP [ \action_t = j, \pullcount_j(t) > M, \hat{\mean}_j(t) < u_j, \pullcount_1(t) = 0 ] .
\end{align*}
Denote by $\cE_{1, t}$, $\cE_{2, t}$, $\cE_{3, t}$, $\cE_{4, t}$ and $\cE_{5, t}$ the five summands on the right-hand side. We have 
\begin{align}
	\cJ_3 \leq \sum_{t=1}^{T} ( \cE_{1, t} + \cE_{2, t} + \cE_{3, t} + \cE_{4, t} + \cE_{5, t} )
	\label{eqn-J-1-0}
\end{align}
We will control the $\cE_{j, t}$'s individually. The following fact will come in handy: for any $\history_{t-1}$-measurable event $\eventA$,
\begin{align}
\PP ( \{ \action_t = j \} \cap \eventA )
& = \EE [ \PP (  \{ \action_t = j \} \cap \eventA | \history_{t-1} ) ] 
= \EE [ \PP ( \action_t = j  | \history_{t-1} ) \cdot \one (\eventA) ] 
= \EE [ \distQ_{t-1}(j) \one (\eventA) ] .
	\label{eqn-conditioning-1}
\end{align}
We also need to characterize the generalized posterior $\distQ_{t}$. Remark \ref{remark-MAB} implies that
\[
\distQ_{t} (j) 
= \frac{ e^{- 	\Lambda ( j , \dataset_{t} ) } \distQ_0 (j) }{ 
	\sum_{k=1}^{K} e^{ - 
		\Lambda ( k , \dataset_{t} ) }  \distQ_0 (k)
} ,
\]
where
\begin{align}
	\Lambda ( j , \dataset_{t} ) = 
	\min_{ \btheta \in \spaceofparameter_j } \bigg\{
	\frac{1}{2 \sigma^2 } \sum_{k=1}^{K} 
	\pullcount_k (t+1) [ \hat\mean_k (t+1) - \parameter_k ]^2 	
	\bigg\}.
		\label{eqn-Lambda}
\end{align}
Then, we have
\begin{align}
	\frac{\distQ_{t} (j) }{
		\distQ_{t} (i) 
	}
	= e^{\Lambda(i, \dataset_{t}) - \Lambda(j, \dataset_{t})} 
		\frac{\distQ_0 (j) }{
				\distQ_0 (i) 
			}
	.
	\label{eqn-ratio}
\end{align}

We invoke some useful estimates for $\Lambda$, whose proof is deferred to \Cref{sec-lem-ratio-proof}.

\begin{lemma}\label{lem-ratio}
	Suppose that $i, j \in [K]$ and $\hat{\mean}_i(t) \geq \hat{\mean}_j(t)$.
	\begin{enumerate}
		\item\label{lem-ratio-1} 
		We have
		\[
		\Lambda (j, \dataset_{t-1})
		\geq \frac{1}{2 \sigma^2} \cdot \frac{ [ \hat\mean_{i} (t)  - \hat\mean_j (t)  ]^2 }{  1/ \pullcount_{i} (t)  + 1/ \pullcount_j  (t) } .
		\]
		
		\item\label{lem-ratio-2} If $\pullcount_j(t) \geq \pullcount_i(t)$, then $ \Lambda ( i, \dataset_{t-1}) - \Lambda (j, \dataset_{t-1}) \leq 0 $.
		
		\item\label{lem-ratio-3} If $\pullcount_j(t) < \pullcount_i(t)$, then
		\[
		\Lambda ( i, \dataset_{t-1}) - \Lambda (j, \dataset_{t-1}) 
		\geq 
		- \frac{1}{2 \sigma^2}
		\cdot
		\frac{
			(  \hat\mean_i  - \hat\mean_j )^2
		}{
			1/\pullcount_j   - 1/ \pullcount_i 
		}
		.
		\]
	\end{enumerate}
\end{lemma}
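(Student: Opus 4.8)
The plan is to reduce the $K$-dimensional constrained quadratic program defining $\Lambda$ to a one-dimensional optimization and then compare the reduced objectives of arms $i$ and $j$. Throughout, abbreviate $N_k = \pullcount_k(t)$ and write $\hat\mean_k$ for $\hat\mean_k(t)$, and recall that for this multi-armed bandit instance $\spaceofparameter = \RR^K$ and $\spaceofparameter_j = \{ \bv :~ v_j \ge v_k,~\forall k \}$. Fixing the maximal coordinate $\theta_j = s$, the remaining coordinates decouple and are optimally projected onto $\theta_k \le s$, i.e.~$\theta_k = \min(\hat\mean_k, s)$, so that
\[
\Lambda ( j, \dataset_{t-1} ) = \frac{1}{2\sigma^2} \min_{ s \in \RR } g_j(s), \qquad g_j(s) = N_j ( \hat\mean_j - s )^2 + \sum_{ k \neq j } N_k \big[ ( \hat\mean_k - s )_+ \big]^2 .
\]
Two elementary facts will drive everything. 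First, $g_j$ is convex with $g_j'(\hat\mean_j) = -2\sum_{k\neq j} N_k ( \hat\mean_k - \hat\mean_j )_+ \le 0$, so its minimizer $s_j^\star$ satisfies $s_j^\star \ge \hat\mean_j$. Second, applying $x^2 - (x_+)^2 = [(-x)_+]^2$ term by term, the two reduced objectives differ by the clean expression
\[
g_j(s) - g_i(s) = N_j \big[ ( s - \hat\mean_j )_+ \big]^2 - N_i \big[ ( s - \hat\mean_i )_+ \big]^2 .
\]

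For Part 1 I would bypass the reduction: relax $\spaceofparameter_j$ to the single half-space $\{ v_j \ge v_i \}$ and discard every squared term except those indexed by $i$ and $j$, which only lowers the value since the discarded terms are nonnegative. As $\hat\mean_i \ge \hat\mean_j$, the unconstrained optimum violates $v_j \ge v_i$, so the constraint binds ($v_i = v_j$) and the scalar minimization of $N_i ( \hat\mean_i - v )^2 + N_j ( \hat\mean_j - v )^2$ equals $( \hat\mean_i - \hat\mean_j )^2 / ( 1/N_i + 1/N_j )$, giving the claimed bound. For Part 2 ($N_j \ge N_i$) I would evaluate $g_i$ at $s_j^\star$: the difference identity gives $\min_s g_i(s) \le g_i(s_j^\star) = \min_s g_j(s) - \big( N_j [ ( s_j^\star - \hat\mean_j )_+ ]^2 - N_i [ ( s_j^\star - \hat\mean_i )_+ ]^2 \big)$, and since $s_j^\star \ge \hat\mean_j$ and $\hat\mean_i \ge \hat\mean_j$ force $( s_j^\star - \hat\mean_i )_+ \le s_j^\star - \hat\mean_j$, combined with $N_j \ge N_i$ the parenthesized quantity is nonnegative, whence $\Lambda(i,\dataset_{t-1}) \le \Lambda(j,\dataset_{t-1})$. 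For Part 3 ($N_j < N_i$) I would symmetrically evaluate $g_j$ at $s_i^\star$; now $s_i^\star \ge \hat\mean_i \ge \hat\mean_j$ makes both positive parts active, so $\min_s g_j(s) - \min_s g_i(s) \le N_j ( s_i^\star - \hat\mean_j )^2 - N_i ( s_i^\star - \hat\mean_i )^2$. Writing $x = s_i^\star - \hat\mean_i \ge 0$ and $d = \hat\mean_i - \hat\mean_j \ge 0$, the right-hand side is the concave quadratic $( N_j - N_i ) x^2 + 2 N_j d x + N_j d^2$, whose maximum over $x$ equals $N_i N_j d^2 / ( N_i - N_j ) = d^2 / ( 1/N_j - 1/N_i )$, yielding the stated lower bound on $\Lambda(i,\dataset_{t-1}) - \Lambda(j,\dataset_{t-1})$.

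The conceptual crux, and the only step requiring genuine insight, is the one-dimensional reduction together with the difference identity $g_j(s) - g_i(s) = N_j [ ( s - \hat\mean_j )_+ ]^2 - N_i [ ( s - \hat\mean_i )_+ ]^2$; once these are in hand, all three parts collapse to evaluating one arm's reduced objective at the other arm's minimizer and invoking the sign facts $s_k^\star \ge \hat\mean_k$ and $\hat\mean_i \ge \hat\mean_j$. The remaining work is routine scalar calculus (the weighted-average minimization in Part 1 and the concave-quadratic maximization in Part 3), so I do not anticipate any serious obstacle beyond correctly tracking which constraint binds and verifying that the relevant positive parts are active at the minimizers.
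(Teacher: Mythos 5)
Your proposal is correct, and for Parts 2 and 3 it takes a genuinely different route from the paper's proof. Part 1 is essentially the paper's own argument (drop every squared term except those for $i$ and $j$, relax $\spaceofparameter_j$ to the half-space $\theta_j \ge \theta_i$, and solve the two-point weighted least-squares problem); just note, as the paper does, that the case $\hat\mean_i(t) = \hat\mean_j(t)$ must be set aside as trivial, since there your claim that the unconstrained optimum violates $v_j \ge v_i$ fails (the bound is then $0 \ge 0$). For Parts 2 and 3 the paper works directly in $\RR^K$: it takes a minimizer $\bar\btheta$ of one arm's quadratic program and hand-crafts a feasible comparison point $\bm{\eta}$ in the other arm's cone by overwriting the $i$th and $j$th coordinates, which costs a case split ($\bar\theta_j \ge \hat\mean_i$ versus $\bar\theta_j < \hat\mean_i$) in Part 2 and a standalone Claim ($\bar\theta_i \ge \hat\mean_i$ and $\bar\theta_j = \hat\mean_j$) in Part 3, before arriving at the same scalar concave quadratic $\inf_{z \ge 0} \{ \pullcount_i z^2 - \pullcount_j [ z + ( \hat\mean_i - \hat\mean_j ) ]^2 \}$ that you maximize in closed form. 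Your route instead profiles out all coordinates given the peak value $s$ --- valid because $\spaceofparameter = \RR^K$ makes the coordinates decouple under $\theta_k \le s$, giving $\theta_k = \min ( \hat\mean_k, s )$ --- after which everything follows from the single identity $g_j(s) - g_i(s) = N_j [ ( s - \hat\mean_j )_+ ]^2 - N_i [ ( s - \hat\mean_i )_+ ]^2$ (writing $N_k = \pullcount_k(t)$) together with the derivative-sign fact $s_k^\star \ge \hat\mean_k$; in particular the paper's Claim in Part 3 is subsumed by $s_i^\star \ge \hat\mean_i$, and no case analysis is needed anywhere. The trade-off: the paper's feasible-point swaps are elementary and never require computing the reduced objective $g_j$, while your reduction is more systematic, treats Parts 2 and 3 symmetrically under one identity, and is pleasingly in the spirit of the paper's profile-likelihood framework --- though the decoupling step is tied to the unconstrained parameter space, so, like the paper's swap construction, it would not survive structural constraints such as \eqref{eqn-MAB-Lipschitz}. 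Both proofs implicitly assume $\pullcount_i(t), \pullcount_j(t) \ge 1$ wherever reciprocals appear, which is how the lemma is invoked, so this is not a gap.
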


We are now in a position to tackle the summands $\{ \cE_{j, t} \}_{j=1}^5$ in \eqref{eqn-J-1-0}.

\paragraph{Bounding $\cE_{1, t}$.}
Let $\eventA$ be the event $\{  \pullcount_j(t) > M, \hat{\mean}_j(t) < u_j, \pullcount_1(t) > M' ,
\hat{\mean}_1 (t) > v_j \}$. Choose $\hat\action_t \in \argmax_{j \in [K]} \hat{\mean}_j(t)$. We have $\Lambda (
\hat\action_t
, \dataset_{t-1}) = 0$. Then, the relation \eqref{eqn-ratio} and the uniformity of $\distQ_0$ yield
\[
\distQ_{t-1}(j) 
= e^{ \Lambda (
	\hat\action_t
	, \dataset_{t-1}) - \Lambda (j, \dataset_{t-1})}
\distQ_{t-1}(\hat{\action}_t) 
\leq e^{ -\Lambda (j, \dataset_{t-1})}.
\]
Under $\eventA$, Part \ref{lem-ratio-1} of \Cref{lem-ratio} implies that
\begin{align*}
\distQ_{t-1}(j) 
& 
 \leq \exp\bigg(
- \frac{1}{2 \sigma^2}
\cdot
\frac{ (v_j - u_j)^2 }{  1/ M + 1/ M' } 
\bigg)
= \exp\bigg(
- \frac{M'
\Delta_j^2 
}{ 36 \sigma^2}
\bigg).
\end{align*}
By \eqref{eqn-conditioning-1},
\begin{align}
\cE_{1, t} 
= 
 \EE [
\distQ_{t-1}(j)  \cdot \one (\eventA)
]
\leq \exp\bigg(
- \frac{M'
	\Delta_j^2 
}{ 36 \sigma^2}
\bigg).
\label{eqn-E-1}
\end{align}

\paragraph{Bounding $\cE_{2, t}$.} 
We have
\begin{align}
	\cE_{2, t} & \leq \PP [ \hat{\mean}_1 (t) \leq v_j , \pullcount_1(t) > M' ] .
\label{eqn-E-2-t-0}
\end{align}
For large $M'$, the probability should be small because if Arm 1 is pulled many times, then its empirical average reward should be close to the population mean. We will study this via concentration of self-normalized martingale \citep{APS11}.
For any $t \in \ZZ_+$, denote by $\reward_{t, 1}$ the potential reward if Arm 1 is pulled in the $t$-th round, and $\bar\history_{t-1}$ the $\sigma$-field generated by $\dataset_{t-1}$ and $\action_{t}$. Define $\eta_0 = 0$ and $\eta_t = \reward_{t, 1} - \mean_1$ for $t \geq 1$. Then, $\eta_t$ is $\bar\history_t$-measurable, $\EE (  \eta_t | \bar\history_{t-1} ) = 0$, and Assumption \ref{assumption-subg} implies that
\[
\EE ( e^{\lambda \eta_t} | \bar\history_{t-1} ) \leq e^{\lambda^2 / 2} , \qquad \forall \lambda \in \RR.
\]
Define
\begin{align*}
& V_t = M' + \sum_{i=0}^{t-1} \one ( \action_i = 1 ) = M' + N_1(t) , \\
& S_t = \sum_{i=0}^{t-1} \eta_i \one ( \action_i = 1 ) = N_1(t) [\hat{\mean}_1 (t) - \mean_1].
\end{align*}
Choose any $\delta > 0$. Theorem 1 in \cite{APS11} implies that
\begin{align*}
\PP \bigg(
|S_t|^2 / V_t \leq 2 \log ( \delta^{-1}
 \sqrt{
 V_t / M'
}
)
,~~\forall t \geq 1
\bigg) \geq 1 - \delta.
\end{align*}
When the above event happens, for all $t \geq 1$ we have
\begin{align*}
\frac{
\{ N_1(t) [\hat{\mean}_1 (t) - \mean_1] \}^2
}{
M' + N_1(t)
}
\leq 2 \log(1 / \delta) + \log \bigg(
\frac{M' + N_1(t)}{M'}
\bigg)
\leq 2 \log(1 / \delta) + \frac{N_1(t)}{M'},
\end{align*}
where the last inequality follows from the elementary fact $\log (1+z) \leq z$, $\forall z \geq 0$. Therefore, we have
\begin{align*}
\PP \bigg(
	\frac{
	 N_1(t) 
	}{
		M' + N_1(t)
	}[\hat{\mean}_1 (t) - \mean_1]^2
	\leq \frac{ 2 \log(1 / \delta) }{N_1(t)} + \frac{1}{M'}, ~~ \forall t \geq 1
\bigg)
\geq 1 - \delta.
\end{align*}
Denote by $\eventA_{\delta}$ the above event.

We now come back to \eqref{eqn-E-2-t-0}. On the event $\{ \hat{\mean}_1 (t) \leq v_j , \pullcount_1(t) > M' \}$, we have
\begin{align*}
	\frac{
	N_1(t) 
}{
	M' + N_1(t)
}[\hat{\mean}_1 (t) - \mean_1]^2 > \frac{\Delta_j^2}{18}
\qquad\text{and}\qquad
\frac{ 2 \log(1 / \delta) }{N_1(t)} + \frac{1}{M'}
< \frac{2 \log (1 / \delta) + 1}{M'}.
\end{align*}
Take $\delta = e^{1/2 - M' \Delta_j^2 / 36}$. We have $[ 2 \log (1 / \delta) + 1 ] / M' = \Delta_j^2 / 18$ and thus
\[
\PP [  \hat{\mean}_1 (t) \leq v_j  , \pullcount_1(t) > M' ]
\leq 
\PP \bigg(
	\frac{
	N_1(t) 
}{
	M' + N_1(t)
}[\hat{\mean}_1 (t) - \mean_1]^2 
> 
\frac{ 2 \log(1 / \delta) }{N_1(t)} + \frac{1}{M'}
\bigg)
\leq \PP ( \eventA_{\delta}^c ) \leq \delta.
\]
Then, \eqref{eqn-E-2-t-0} leads to
\begin{align}
\cE_{2, t} \leq e^{1/2 - M' \Delta_j^2 / 36}.
	\label{eqn-E-2}
\end{align}

\paragraph{Bounding $\cE_{3, t}$.} 
Let $\eventA$ be the event $\{ \pullcount_j(t) > M, \hat{\mean}_j(t) < u_j, 
1 \leq \pullcount_1(t) < M', \hat{\mean}_1 (t) > \hat{\mean}_j (t) \}$. Under $\eventA$, we apply the relation \eqref{eqn-ratio} and Part \ref{lem-ratio-2} of \Cref{lem-ratio} to Arms $1$ and $j$ (as the $i$ and $j$ therein) and then obtain $  \distQ_{t-1} (j)  \leq \distQ_{t-1} (1) $. Therefore, by \eqref{eqn-conditioning-1},
\begin{align}
& \cE_{3, t}
\leq  \EE [  \distQ_{t-1} (1)  \cdot \one (\eventA) ] 
\leq 	\EE \Big(
\one [ \action_t = 1, 1 \leq \pullcount_1(t) < M'  ] 
\Big) ,
\notag \\
& \sum_{t=1}^T
\cE_{3, t}
	\leq 	\EE \bigg(
\sum_{t=1}^T	\one [ \action_t = 1, 1 \leq \pullcount_1(t) < M'  ] 
	\bigg) 
\leq \lceil M' \rceil - 1.
\label{eqn-E-3}
\end{align}

\paragraph{Bounding $\cE_{4, t}$.} Let $\eventA$ be the event $\{ \pullcount_j(t) > M, \hat{\mean}_j(t) < u_j, 
1 \leq \pullcount_1(t) < M', \hat{\mean}_1 (t) \leq \hat{\mean}_j (t)  \}$. By \eqref{eqn-conditioning-1},
\[
\cE_{4, t}
=
 \EE 
\bigg(
\frac{\distQ_{t-1}(j)}{ \distQ_{t-1} (1) } 
 \distQ_{t-1} (1)  \cdot \one (\eventA) 
\bigg)
=
\EE \bigg(
\frac{\distQ_{t-1}(j)}{ \distQ_{t-1} (1) } 
\cdot \one ( \{ \action_t = 1  \} \cap \eventA) 
\bigg) .
\]
Under $\eventA$, we can apply the relation \eqref{eqn-ratio} and Part \ref{lem-ratio-3} of \Cref{lem-ratio} to Arms $j$ and $1$ (as the $i$ and $j$ therein). This yields
\begin{align*}
\frac{\distQ_{t-1}(j)}{ \distQ_{t-1} (1) } 
& \leq 
\exp\bigg(
 \frac{1}{2 \sigma^2}
\cdot
\frac{
	[ \hat\mean_j (t) - \hat\mean_1(t) ]^2
}{
	1/\pullcount_1 (t) - 1/ \pullcount_j (t)
}
\bigg)
 \leq 
\exp\bigg(
\frac{1}{2 \sigma^2}
\cdot
\frac{
[ \mu_1 - \hat\mean_1(t) ]^2
}{
	1/\pullcount_1(t) - 1/ \pullcount_j (t) 
}
\bigg) .
\end{align*}
Since $1 \leq \pullcount_1(t) < (1-c) M < M < \pullcount_j(t)$, we have $\pullcount_j(t) > \pullcount_1(t) / (1-c)$ and
\[
\frac{1}{\pullcount_1(t)} - \frac{1}{\pullcount_j(t)} \geq 
\frac{1}{\pullcount_1(t)} - \frac{1}{\pullcount_1(t) / (1-c)} =  \frac{c}{\pullcount_1(t)}.
\]
Hence,
\begin{align*}
&
\frac{ \distQ_{t-1} (j) }
{ 
	\distQ_{t-1} (1)
}
	\leq 
	\exp\bigg(
	\frac{ \pullcount_1(t)
		[ \mu_1 - \hat\mean_1(t) ]^2
	}{ 2 c \sigma^2}
	\bigg) .
\end{align*}
We have
\begin{align*}
\sum_{t=1}^{T}
\cE_{4, t} 
& \leq
\EE 
\bigg[
\sum_{t=1}^{T}
\exp\bigg(
\frac{ \pullcount_1(t)
	[  \hat\mean_1(t) - \mu_1 ]^2
}{2 c \sigma^2}
\bigg)
\one [ \action_t = 1, 1 \leq \pullcount_1(t) < M'  ] 
\bigg] \\
& \leq \EE 
\bigg[
\sum_{k=2}^{ \lceil M' \rceil - 1}
\exp\bigg(
\frac{ (k-1)
	[  \hat\mean_1( \tau_{1,k} ) - \mu_1 ]^2
}{2 c \sigma^2}
\bigg)
\bigg]
 = \sum_{s = 1}^{ \lceil M' \rceil - 2} \EE 
\bigg[
\exp\bigg(
\frac{ s
( \xi_{1,s}  - \mu_1 )^2
}{2 c \sigma^2}
\bigg)
\bigg] ,
\end{align*}
where $\tau_{1,k}$ is the time of the $k$-th pull of Arm $1$, see \Cref{defn-filtration}. 

When $c \sigma^2 > 1$, we obtain from \Cref{lem-tail} that
\[
 \EE 
\bigg[
\exp\bigg(
\frac{ s
	( \xi_{1,s}  - \mu_1 )^2
}{2 c \sigma^2}
\bigg)
\bigg]
\leq \frac{1}{ \sqrt{ 1 - 1 / (c \sigma^2) } } , \qquad \forall s \in \ZZ_+.
\]
Therefore,
\begin{align}
	\sum_{t=1}^{T}
	\cE_{4, t} 
	& \leq
\frac{  \lceil M' \rceil - 2 }{ \sqrt{ 1 - 1 / (c \sigma^2) } } .
	\label{eqn-E-4}
\end{align}

\paragraph{Bounding $\cE_{5, t}$.} If $\pullcount_1(t) = 0 $, then $\Lambda ( 1 , \dataset_{t-1}) = 0$. The relation \eqref{eqn-ratio} yields
\begin{align*}
\frac{ \distQ_{t-1} (j) }
{ 
\distQ_{t-1} (1)
}
 = e^{  \Lambda ( 1 , \dataset_{t-1}) - \Lambda (j, \dataset_{t-1}) } \leq 1.
\end{align*}
Then, by \eqref{eqn-conditioning-1},
\begin{align*}
\cE_{5, t} 
& \leq \PP [ \action_t = j,  \pullcount_1(t) = 0 ] 
= \EE [ \distQ_{t-1} (j) \cdot \one ( \pullcount_1(t) = 0 ) ] 
 \\&
 \leq \EE [ \distQ_{t-1} (1) \cdot \one ( \pullcount_1(t) = 0 ) ] 
 = \EE \Big(
\one [ \action_t = 1 ,  \pullcount_1(t) = 0 ] 
\Big).
\end{align*}
We have
\begin{align}
\sum_{t=1}^{T} \cE_{5, t}
\leq 
 \EE \bigg(
\sum_{t=1}^{T}
\one [ \action_t = 1 ,  \pullcount_1(t) = 0 ] 
\bigg)
\leq 1.
\label{eqn-E-5}
\end{align}

\paragraph{Bounding $\cJ_{3}$.} 
Below we use $\lesssim$ to hide universal constant factors. Summarizing \eqref{eqn-E-1}, \eqref{eqn-E-2}, \eqref{eqn-E-3}, \eqref{eqn-E-4} and \eqref{eqn-E-5}, we get
\begin{align*}
\cJ_3
& \leq 
T  \exp\bigg(
- \frac{M'
	\Delta_j^2 
}{ 36 \sigma^2}
\bigg)
+ 
T e^{1/2 - M' \Delta_j^2 / 36}
+ (
\lceil M' \rceil - 1
)
+ 
\frac{  \lceil M' \rceil - 2 }{ \sqrt{ 1 - 1 / (c \sigma^2) } } 
+ 1 \notag \\
&\lesssim
T   \exp\bigg(
- \frac{M'
	\Delta_j^2 
}{ 36 \sigma^2}
\bigg)
+ \frac{ (M' + 1) }{
\sqrt{ 1 - 1 / (c \sigma^2) } 
} 
=
T \exp\bigg(
- \frac{ (1-c) M
	\Delta_j^2 
}{ 36 \sigma^2}
\bigg)
+ \frac{  [ (1-c) M + 1] }{
	\sqrt{ 1 - 1 / (c \sigma^2) } 
} ,
\end{align*}
so long as $1 / \sigma^2 < c < 1$. Let $c = (1 + \sigma^{-2}) / 2$. We have $1 - c = (1 - \sigma^{-2})/2$ and $1 - 1/ c \sigma^2 = (\sigma^2 - 1) / (\sigma^2 + 1) \geq (1 - \sigma^{-2})/2$. Hence,
\begin{align}
	\cJ_3
	&  \lesssim
T
\exp\bigg(
- \frac{ (1-\sigma^{-2}) M
	\Delta_j^2 
}{ 72 \sigma^2}
\bigg)
+ \frac{
1
}{
\sqrt{ 1 - \sigma^{-2} }
}
\bigg(
\frac{1 - \sigma^{-2}}{2} M + 1
\bigg) \notag\\
&  \lesssim
T
  \exp\bigg(
- \frac{ (1-\sigma^{-2}) M
	\Delta_j^2 
}{ 72 \sigma^2}
\bigg)
+  M +
\frac{
1
}{
	\sqrt{ 1 - \sigma^{-2} }
}
.
	\label{eqn-J-1}
\end{align}

\subsubsection{Final steps}

Combining \eqref{eqn-J-0}, \eqref{eqn-J-3}, \eqref{eqn-J-2} and \eqref{eqn-J-1}, we get
\begin{align*}
 \sum_{t=1}^{T}  \PP ( \action_t = j ) 
& \lesssim
\bigg[
T  \exp\bigg(
- \frac{ (1-\sigma^{-2}) M
	\Delta_j^2 
}{ 72 \sigma^2}
\bigg)
+ M +
\frac{
1
}{
	\sqrt{ 1 - \sigma^{-2} }
}
\bigg]
+ \bigg(
 \frac{1}{\Delta_j^2} + 1
\bigg)
+ (M+1) \\
& \lesssim
T 
 \exp\bigg(
- \frac{ (1-\sigma^{-2}) M
	\Delta_j^2 
}{ 72 \sigma^2}
\bigg)
+  \frac{1}{\Delta_j^2} + M  + \frac{1}{ \sqrt{1 - \sigma^{-2}} } 
, \qquad \forall M > 0.
\end{align*}
By taking
\[
M = \frac{72 \sigma^2}{1 - \sigma^{-2}} \cdot  \frac{ \log ( \max\{ T \Delta_j^2 , e \} }{ \Delta_j^2 } ,
\]
we get
\begin{align*}
\sum_{t=1}^{T}  \PP ( \action_t = j ) 
& \lesssim
T e^{ - \log (  \max\{ T \Delta_j^2 , e \} ) }
+ \frac{1}{\Delta_j^2}
	+ 
 \frac{ \sigma^2}{1 - \sigma^{-2}} \cdot  \frac{ \log (  \max\{ T \Delta_j^2 , e \} ) }{ \Delta_j^2 } 
  + \frac{1}{ \sqrt{1 - \sigma^{-2}} }
 \\
& =   \frac{ \sigma^2}{1 - \sigma^{-2}} \cdot  \frac{ \log ( \max\{ T \Delta_j^2 , e \} ) }{ \Delta_j^2 } 
+ \frac{1}{ \sqrt{1 - \sigma^{-2}} }
.
\end{align*}

\subsection{Proof of \Cref{lem-ratio}}\label{sec-lem-ratio-proof}

\subsubsection{Part \ref{lem-ratio-1}}

For notational simplicity, we will suppress the time index $t$ in $\pullcount_k(t)$'s and $\hat{\mean}_k(t)$'s. 
The result is trivial when $\hat\mean_i = \hat\mean_j$. Below we assume that $\hat\mean_i > \hat\mean_j$. 
By \eqref{eqn-Lambda}, we have
\begin{align}
	&	2 \sigma^2 \Lambda (j, \dataset_{t-1})
	= \min_{\btheta \in \spaceofparameter_j } 
	\bigg\{
	\sum_{k=1}^{K} 
	\pullcount_k   (  \hat\mean_k  - \theta_k )^2 
	\bigg\} \notag\\
	& \geq  \min_{\btheta \in \spaceofparameter_j } 
	\bigg\{
	\pullcount_j   (  \hat\mean_j   - \theta_j )^2 
	+ 
	\pullcount_{i}   ( \hat\mean_{i}  - \theta_{i} )^2 
	\bigg\} 
	=   \min_{ \theta_j \geq \theta_{i} } 
	\bigg\{
	\pullcount_j   ( \hat\mean_j   - \theta_j )^2 
	+ 
	\pullcount_{i}   ( \hat\mean_{i}  - \theta_{i} )^2 
	\bigg\}.
\label{eqn-proof-ratio-1}
\end{align}
Denote by $h(\theta_j, \theta_i)$ the function in the brackets. The assumption $\hat{\mean}_i  > \hat{\mean}_j $ implies that for any $\theta_j$,
\[
\min_{ \theta_i \leq \theta_j }  
h(\theta_j, \theta_i)
= h \Big( \theta_j, \min \{ \hat{\mean}_i  , \theta_j \} \Big)
= \pullcount_j   (  \hat\mean_j   - \theta_j )^2 
+ 
\pullcount_{i}   ( \hat\mean_{i}  - \theta_{j} )_+^2 
.
\]
View the above as a function of $ \theta_{j} $. It is strictly increasing on $( \hat\mean_{i}  , +\infty)$. On the complement set $(-\infty, \hat\mean_{i}  ]$, the expression simplies to $\pullcount_j   ( \hat\mean_j   - \theta_j )^2 
+ 
\pullcount_{i}   (\hat\mean_{i}  - \theta_{j} )^2 $. This function's minimizer and minimum value are
\[
\frac{ \pullcount_j    \hat\mean_j    + \pullcount_{i}    \hat\mean_{i}   }{  \pullcount_j    + \pullcount_{i}  }
\qquad\text{and}\qquad
\frac{ (  \hat\mean_{i}   - \hat\mean_j   )^2 }{  1/ \pullcount_{i}   + 1/ \pullcount_j   } .
\] 
This fact and \eqref{eqn-proof-ratio-1} lead to the desired inequality. 

\subsubsection{Part \ref{lem-ratio-2}}

Choose any $ \bar\btheta \in 
\argmin_{\btheta \in \spaceofparameter_j } \ell (\btheta, \dataset_{t-1})$.

\paragraph{Case 1: $\bar\theta_j \geq \hat\mean_i  $.}
It is easily seen that $\bar\theta_i =  \hat\mean_i$. Define $\bm{\eta} \in \RR^K$ by
\[
\eta_k = \begin{cases}
	\hat\mean_j & , \mbox{ if } k = j  \\
	\bar\theta_j & , \mbox{ if } k = i \\
	\bar\theta_k & , \mbox{ otherwise }
\end{cases}.
\]
We have $\bm{\eta} \in \spaceofparameter_i$ and
\begin{align*}
	& \Lambda ( i, \dataset_{t-1}) - \Lambda (j, \dataset_{t-1}) 
	= \min_{\btheta \in \spaceofparameter_i } 
	\ell (\btheta, \dataset_{t-1})
	-
	\min_{\btheta \in \spaceofparameter_j } 
	\ell (\btheta, \dataset_{t-1})
	\leq 
	\ell ( \bm\eta, \dataset_{t-1})
	- \ell (\bar\btheta, \dataset_{t-1})
	\\
	& = 
	\frac{1}{2\sigma^2}
	\bigg(
	\pullcount_i   (\hat\mean_i  - \eta_i )^2 
	+
	\pullcount_j   (\hat\mean_j  - \eta_j )^2 
	\bigg)
	-
	\frac{1}{2\sigma^2} 
	\bigg(
	\pullcount_i   (\hat\mean_i  - \bar\theta_i )^2 
	+
	\pullcount_j   ( \hat\mean_j  - \bar\theta_j )^2 
	\bigg)
	\\
	& = 
	\frac{1}{2\sigma^2}
	\bigg(
	\pullcount_i   (\hat\mean_i  - \bar\theta_j )^2  - \pullcount_j   ( \hat\mean_j  - \bar\theta_j )^2 
	\bigg) \leq 0.
\end{align*}
The last inequality follows from $\bar\theta_j \geq \hat\mean_i \geq \hat\mean_j $ and $\pullcount_j \geq \pullcount_i$.

\paragraph{Case 2: $\bar\theta_j < \hat\mean_i  $.}
It is easily seen that $\hat\mean_j \leq \bar\theta_j = \bar\theta_i$. Define $\bm{\eta} \in \RR^K$ by
\[
\eta_k = \begin{cases}
	\hat\mean_j  & , \mbox{ if } k = j  \\
	\hat\mean_i & , \mbox{ if } k = i \\
	\bar\theta_k & , \mbox{ otherwise }
\end{cases}.
\]
We have $\bm{\eta} \in \spaceofparameter_i$ and
\begin{align*}
	& \Lambda ( i, \dataset_{t-1}) - \Lambda (j, \dataset_{t-1}) 
	= \min_{\btheta \in \spaceofparameter_i } 
	\ell (\btheta, \dataset_{t-1})
	-
	\min_{\btheta \in \spaceofparameter_j } 
	\ell (\btheta, \dataset_{t-1})
	\leq 
	\ell ( \bm\eta, \dataset_{t-1})
	- \ell (\bar\btheta, \dataset_{t-1})
	\\
	& = 
	\frac{1}{2\sigma^2}
	\bigg(
	\pullcount_i   (\hat\mean_i  - \eta_i )^2 
	+
	\pullcount_j   (\hat\mean_j  - \eta_j )^2 
	\bigg)
	-
	\frac{1}{2\sigma^2} 
	\bigg(
	\pullcount_i   (\hat\mean_i  - \bar\theta_i )^2 
	+
	\pullcount_j   ( \hat\mean_j  - \bar\theta_j )^2 
	\bigg)
	\\
	& = 
	0 - \frac{1}{2\sigma^2}
	\bigg(
	\pullcount_i   (\hat\mean_i  - \bar\theta_j )^2  + \pullcount_j   ( \hat\mean_j  - \bar\theta_j )^2 
	\bigg) \leq 0.
\end{align*}

\subsubsection{Part \ref{lem-ratio-3}}

Choose any $\bar\btheta \in 
\argmin_{\btheta \in \spaceofparameter_i } \ell (\btheta, \dataset_{t-1})$. We invoke a useful result.

\begin{claim}\label{claim-lem-ratio-3}
$\bar\theta_i \geq \hat{\mean}_i   \geq \hat{\mean}_j   = \bar\theta_j $
\end{claim}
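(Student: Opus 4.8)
The plan is to exploit the fact that $\ell(\btheta, \dataset_{t-1}) = \frac{1}{2\sigma^2}\sum_{k=1}^K \pullcount_k (\hat\mean_k - \theta_k)^2$ is separable across coordinates, with the coordinates coupled only through the constraint $\theta_i \geq \theta_k$ $(\forall k)$ that defines $\spaceofparameter_i$ in \eqref{eqn-parameterspace-j}. Since Part \ref{lem-ratio-3} assumes $\pullcount_j < \pullcount_i$ and (implicitly, for the stated bound to be well posed) $\pullcount_j \geq 1$, both quadratic weights $\pullcount_i, \pullcount_j$ are positive. The middle inequality $\hat\mean_i \geq \hat\mean_j$ is exactly the standing hypothesis of \Cref{lem-ratio}, so only the two outer relations $\bar\theta_i \geq \hat\mean_i$ and $\bar\theta_j = \hat\mean_j$ require proof; I would establish both by elementary perturbation arguments at the minimizer $\bar\btheta$.

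For $\bar\theta_i \geq \hat\mean_i$, I would argue by contradiction. Suppose $\bar\theta_i < \hat\mean_i$, and consider raising only the $i$-th coordinate to $\bar\theta_i + \varepsilon$ with $0 < \varepsilon \leq \hat\mean_i - \bar\theta_i$, keeping all other coordinates fixed. Feasibility is preserved because $\theta_i$ is already the largest coordinate, so increasing it cannot violate any constraint $\theta_i \geq \theta_k$. Only the $i$-term of $\ell$ is affected, and since $\bar\theta_i < \bar\theta_i + \varepsilon \leq \hat\mean_i$, the residual $(\hat\mean_i - \theta_i)^2$ strictly decreases (using $\pullcount_i > 0$). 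This contradicts the optimality of $\bar\btheta$, forcing $\bar\theta_i \geq \hat\mean_i$.

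For $\bar\theta_j = \hat\mean_j$, I would fix $\theta_i = \bar\theta_i$ and note that the only constraint acting on $\theta_j$ within $\spaceofparameter_i$ is $\theta_j \leq \theta_i = \bar\theta_i$. The $j$-th coordinate therefore minimizes the single strictly convex term $\pullcount_j(\hat\mean_j - \theta_j)^2$ over the half-line $\theta_j \leq \bar\theta_i$. Its unconstrained minimizer $\theta_j = \hat\mean_j$ is feasible, since $\hat\mean_j \leq \hat\mean_i \leq \bar\theta_i$ by the hypothesis and the previous step; hence $\bar\theta_j = \hat\mean_j$. Chaining the three relations yields $\bar\theta_i \geq \hat\mean_i \geq \hat\mean_j = \bar\theta_j$.

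The argument is short and I do not anticipate a genuine obstacle; the only points demanding care are verifying that the upward perturbation of $\theta_i$ stays inside $\spaceofparameter_i$ (immediate, since the perturbed coordinate is the maximal one) and invoking $\pullcount_i, \pullcount_j \geq 1$ so that the relevant quadratics are strictly monotone/convex and their minimizers are the claimed interior values. Both are guaranteed by the Part \ref{lem-ratio-3} hypotheses $\pullcount_i > \pullcount_j \geq 1$.
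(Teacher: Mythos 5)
Your proof is correct and takes essentially the same approach as the paper: both are perturbation-at-the-minimizer arguments that exploit the separability of $\ell$ and the fact that raising the $i$-th coordinate or moving the $j$-th coordinate below $\bar\theta_i$ stays inside $\spaceofparameter_i$. The only difference is packaging---the paper merges your two coordinatewise steps into a single feasible competitor $\bm{\eta}$ with $\eta_i = \max\{\bar\theta_i, \hat{\mean}_i\}$ and $\eta_j = \hat{\mean}_j$, so one objective comparison forces both conclusions simultaneously (and, as you correctly flag, it likewise relies implicitly on $\pullcount_i > \pullcount_j \geq 1$).
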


\begin{proof}[\bf Proof of Claim \ref{claim-lem-ratio-3}]
Define $\bm{\eta} \in \RR^K$ by
\[
\eta_k = \begin{cases}
\max\{ \bar\theta_i ,	\hat{\mean}_i  \}& , \mbox{ if } k = i \\
\hat\mean_j & , \mbox{ if } k = j  \\
	\bar\theta_k & , \mbox{ otherwise }
\end{cases}.
\]
We have $\bm{\eta} \in \spaceofparameter_i$ and
\begin{align*}
0 \geq 2 \sigma^2 [ \ell (\bar\btheta, \dataset_{t-1}) - \ell (\bm{\eta} , \dataset_{t-1}) ]
& = 
[\pullcount_i   (\hat\mean_i  - \bar\theta_i )^2 
+
\pullcount_j   ( \hat\mean_j  - \bar\theta_j )^2 ]
-
[\pullcount_i   (\hat\mean_i  - \eta_i )^2 
+
\pullcount_j   (\hat\mean_j  - \eta_j )^2 ]
 \\
& = \pullcount_i   (\hat\mean_i  - \bar\theta_i )_-^2 
+ \pullcount_j   ( \hat\mean_j  - \bar\theta_j )^2 
\end{align*}
The inequality forces $\bar\theta_i \geq \hat\mean_i $ and $ \bar\theta_j  = \hat\mean_j $.
\end{proof}

We now come back to the main proof. Define $\bm{\eta} \in \RR^K$ by
\[
\eta_k = \begin{cases}
	\bar\theta_i & , \mbox{ if } k = j  \\
	\hat{\mean}_i  & , \mbox{ if } k = i \\
	\bar\theta_k & , \mbox{ otherwise }
\end{cases}.
\]
We have $\bm{\eta} \in \spaceofparameter_j$ and
\begin{align*}
	& \Lambda ( i, \dataset_{t-1}) - \Lambda (j, \dataset_{t-1}) 
	= \min_{\btheta \in \spaceofparameter_i } 
	\ell (\btheta, \dataset_{t-1})
	-
	\min_{\btheta \in \spaceofparameter_j } 
	\ell (\btheta, \dataset_{t-1})
	\geq 
	\ell ( \bar\btheta, \dataset_{t-1})
	- \ell (\bm{\eta}, \dataset_{t-1})
	\\
	& = 
	\frac{1}{2\sigma^2}
	\bigg(
	\pullcount_i   (\hat\mean_i  - \bar\theta_i )^2 
	+
	\pullcount_j   ( \hat\mean_j  - \bar\theta_j )^2 
	\bigg)
	-
	\frac{1}{2\sigma^2} \bigg(
	\pullcount_i   (\hat\mean_i  - \eta_i )^2 
	+
	\pullcount_j   (\hat\mean_j  - \eta_j )^2 
	\bigg)
	\\
	& = 
	\frac{1}{2\sigma^2}
	\bigg(
	\pullcount_i   (\hat\mean_i  - \bar\theta_i )^2  - \pullcount_j   ( \hat\mean_j  - \bar\theta_i )^2 
	\bigg) 
	\geq \frac{1}{2 \sigma^2} \inf_{z \geq \hat\mean_i } \bigg\{
	\pullcount_i   (\hat\mean_i  - z )^2  - \pullcount_j   (\hat\mean_j  - z )^2 
	\bigg\} \\
	& =  \frac{1}{2\sigma^2} \inf_{z \geq 0} 
	\bigg\{
	\pullcount_i    z^2  - \pullcount_j   [ z + (\hat\mean_i  - \hat\mean_j ) ]^2 
	\bigg\}.
\end{align*}

Denote by $g(z)$ the function in the bracket. From
\[
g'(z) / 2 = \pullcount_i  z - \pullcount_j  [ z + ( \hat\mean_i  - \hat\mean_j  ) ]
= ( \pullcount_i  - \pullcount_j  ) z - \pullcount_j  ( \hat\mean_i  - \hat\mean_j  ) .
\]
and $\pullcount_i  > \pullcount_j  $, we derive that
\[
\inf_{z \geq 0} g(z) 
= g \bigg(
\frac{\pullcount_j (  \hat\mean_i  - \hat\mean_j  ) }{\pullcount_i - \pullcount_j}
\bigg)
= - \frac{\pullcount_i \pullcount_j }{\pullcount_i - \pullcount_j} (  \hat\mean_i  - \hat\mean_j  )^2
= - \frac{
	( \hat\mean_i  - \hat\mean_j  )^2
}{
	1/\pullcount_j - 1/ \pullcount_i
}.
\]

{
\bibliographystyle{ims}
\bibliography{bib}
}

\end{document}